\documentclass[11pt]{article} % For LaTeX2e
\usepackage{times}

\usepackage{graphicx} % more modern
\usepackage{amsfonts}
\usepackage{amsmath}
\usepackage{graphicx}

\usepackage{ifthen}
\newboolean{SHORTVERSION}
\setboolean{SHORTVERSION}{true}

% For citations
%\usepackage{natbib}

% For algorithms
%\usepackage{algorithm}
%\usepackage{algorithmic}
\usepackage[ruled,algo2e]{algorithm2e}

\newtheorem{remark}{Remark}

\newtheorem{theorem}{Theorem}

\newtheorem{lemma}[theorem]{Lemma}

\newcommand{\qed}{\hfill $\Box$}
\newenvironment{proof}{\par\noindent{\bf Proof.}}{\qed \par\smallskip\noindent}
\newenvironment{proofof}[1]{\par\noindent{\bf Proof of #1.}}{\qed \par\smallskip\noindent}

\newcommand{\theset}[2]{ \left\{ {#1} \,:\, {#2} \right\} }
\newcommand{\Ind}[1]{ \mathbb{I}\left\{{#1}\right\} }
\newcommand{\alg}{\textsc{Shazoo}}
\newcommand{\treeopt}{\textsc{TreeOpt}}
\newcommand{\wta}{\textsc{wta}}
\newcommand{\omv}{\textsc{omv}}
\newcommand{\labprop}{\textsc{LabProp}}
\newcommand{\sgn}{\mathrm{sgn}}
\newcommand{\scO}{\mathcal{O}}
\newcommand{\scC}{\mathcal{C}}
\newcommand{\yhat}{\widehat{y}}
\newcommand{\by}{\boldsymbol{y}}
\newcommand{\cf}{M_C^{F}}
\newcommand{\ci}{M_C^{\mathrm{in}}}
\newcommand{\co}{M_C^{\mathrm{out}}}
\newcommand{\cF}{C^{F}}
\newcommand{\cut}{\mathrm{cut}}
\newcommand{\fcut}{\mathrm{fcut}}
\newcommand{\spin}{\{-1,+1\}}

\topmargin -0.5in
\oddsidemargin 0in
\evensidemargin 0in
\textheight 8.9in
\textwidth 6.5in
\parskip 0.0in

\title{
%The Shazoo Algorithm: \\ More Optimal Prediction on Trees and Graphs
See the Tree Through the Lines: The Shazoo Algorithm\\
-- Full Version --
}

\author{
Fabio Vitale\\
DSI, University of Milan, Italy\\
{\tt fabio.vitale@unimi.it}
\and
Nicol\`o Cesa-Bianchi\\ 
Dipartimento di Informatica, Universit\`a degli Studi di Milano, Italy\\
\texttt{nicolo.cesa-bianchi@unimi.it}
\and
Claudio Gentile\\ 
DiSTA, Universit\`a dell'Insubria, Italy\\
\texttt{claudio.gentile@uninsubria.it}
\and
Giovanni Zappella\\
Dipartimento di Matematica, Universit\`a degli Studi di Milano, Italy\\
\texttt{giovanni.zappella@unimi.it}
}

% {i_t}he \author macro works with any number of authors. There are two commands
% used to separate the names and addresses of multiple authors: \And and \AND.
%
% Using \And between authors leaves it to \LaTeX{} to determine where to break
% the lines. Using \AND forces a linebreak at that point. So, if \LaTeX{}
% puts 3 of 4 authors names on the first line, and the last on the second
% line, try using \AND instead of \And before the third author name.

\newcommand{\mycom}[1]{\textcolor{red}{[NCB: {#1}]}}

%\nipsfinalcopy % Uncomment for camera-ready version

\begin{document}

\maketitle

%\vspace{-0.3in}
\begin{abstract}
Predicting the nodes of a given graph is a fascinating 
theoretical problem with applications in several domains. 
Since graph sparsification via spanning trees 
retains enough information while making the task much easier, 
trees are an important special case of this problem.
Although it is known how to predict the nodes of an unweighted tree 
in a nearly optimal way, in the weighted case a fully satisfactory 
algorithm is not available yet. We fill this hole and introduce an efficient node predictor, 
\alg, which is nearly optimal on any weighted tree. Moreover, we show that \alg\ can 
be viewed as a common nontrivial generalization of both previous approaches for 
unweighted trees and weighted lines. 
Experiments on real-world datasets confirm that \alg\ performs well in that
it fully exploits the structure of the input tree,
and gets very close to (and sometimes better than)
less scalable energy minimization methods.
\end{abstract}

\section{Introduction}
%
%\vspace{-0.1in}
Predictive analysis of networked data is a fast-growing research area whose application domains 
include document networks, online social networks, and biological networks. In this work
we view networked data as weighted graphs, and focus on the task of node classification in the 
transductive setting, i.e., when the unlabeled graph is available beforehand. 
Standard transductive classification methods, such as label propagation~\cite{BMN04,BDL06,ZGL03}, 
work by optimizing a cost or energy function defined on the graph, which includes the training information as labels assigned to training nodes. 
Although these methods perform well in practice, they are often computationally expensive, and have performance guarantees that require statistical assumptions on the selection of the training nodes.

A general approach to sidestep the above computational issues is to sparsify the graph to the largest possible extent,
while retaining much of its spectral properties ---see, e.g., \cite{CBGV09,CBGVZ10,LP09,ss08}. 
Inspired by~\cite{CBGV09,CBGVZ10}, this paper reduces the problem of node classification from graphs to trees
by extracting suitable {\em spanning trees} of the graph, which can be done quickly in many cases. 
The advantage of performing this reduction is that node prediction is much easier on trees than on
graphs. This fact has recently led to the design of very scalable algorithms with nearly optimal performance 
guarantees in the online transductive model, which comes with no statistical assumptions.
Yet, the current results in node classification on trees are not satisfactory. 
%In the unweighted case, the online prediction strategy \treeopt\ has a
%mistake bound scaling with the tree structure ---see~\cite{CBGV09} for definition and analysis of this algorithm. 
The \treeopt\ strategy of~\cite{CBGV09} is optimal to within constant factors, but only on {\em unweighted} trees.
% small-diameter trees, and to within logarithmic factors on arbitrary trees. 
No equivalent optimality results are available for general weighted trees.
To the best of our knowledge, the only other comparable result 
is \wta\ by~\cite{CBGVZ10}, which is optimal (within log factors) only on weighted lines. 
In fact, \wta\ can still be applied to weighted trees by exploiting an idea contained in~\cite{hlp08}. This 
is based on linearizing the tree via a depth-first visit. 
Since linearization loses most of the structural information of the tree, this approach yields suboptimal mistake bounds. 
%which is only tight to within log factors. 
This theoretical drawback is also confirmed by empirical performance: 
throwing away the tree structure negatively affects the practical behavior 
of the algorithm on real-world weighted graphs.

The importance of weighted graphs, as opposed to unweighted ones, is suggested by many practical
scenarios where the nodes carry more information than just labels, e.g., vectors of feature
values.
% Since the edge weights typically encode the strength of the edge connections,
A natural way of leveraging this side information is to set the weight on the edge linking two 
nodes to be some function of the similariy between the vectors associated with these nodes.
In this work, we bridge the gap between the weighted and unweighted cases by proposing a new 
prediction strategy, called \alg, achieving a mistake bound that depends on the detailed structure 
of the weighted tree. We carry out the analysis using a notion of learning bias different from
the one used in~\cite{CBGVZ10} and more appropriate for weighted graphs. 
More precisely, we measure the regularity of the unknown node labeling via the weighted cutsize 
induced by the labeling on the tree (see Section \ref{s:lower} for a precise definition). 
This replaces the unweighted cutsize 
% $\Phi_T$ 
that was used in the analysis of \wta. When the weighted cutsize is used, a cut edge violates 
this inductive bias in proportion to its weight.
This modified bias does not prevent a fair comparison between the old algorithms and the new one: 
\alg\ specializes to \treeopt\ in the unweighted case, and to \wta\ when the input tree is a weighted line. 
By specializing \alg's analysis to the unweighted case we recover \treeopt's optimal mistake bound. 
When the input tree is a weighted line, we recover \wta's mistake bound expressed through the 
weighted cutsize instead of the unweighted one.
%$\Phi_T$.
 The effectiveness of \alg\ on any tree is guaranteed by a
corresponding lower bound (see Section \ref{s:lower}). 
%showing that no online prediction strategy can make fewer mistakes than $\Omega(\xi_T)$. 

\alg\ can be viewed as a common nontrivial generalization of both \treeopt\ and \wta. Obtaining 
this generalization while retaining and extending the optimality properties of the two algorithms 
is far from being trivial from a conceptual and technical standpoint.
%
% Similar to \treeopt, \alg's mistake bound is $\scO(\xi_T)$ is on small-diameter trees, while exhibits the largest 
% (logarithmic) gap on weighted lines, with a mistake bound of $\scO(\xi_T\ln|V|)$.
%
Since \alg\ works in the online transductive model, it can easily be applied to the more standard 
train/test (or ``batch'') transductive setting: one simply runs the algorithm on an arbitrary 
permutation of the training nodes, 
and obtains a predictive model for all test nodes. However, the implementation might take advantage of knowing 
the set of training nodes beforehand. For this reason, we present two implementations of \alg, one
for the online and one for the batch setting. Both implementations result in fast algorithms. 
In particular, the batch one is linear in $|V|$. 
This is achieved by a fast algorithm for weighted cut minimization on trees, a procedure which lies at the 
heart of \alg.

Finally, we test \alg\ against \wta, label propagation, and other competitors on real-world weighted graphs. 
In {\em almost all} cases (as expected), we report improvements over \wta\ due to the better sensitivity to the graph structure.
In some cases, we see that \alg\ even outperforms standard label propagation methods. Recall that label propagation 
has a running time per prediction which is proportional to $|E|$, where $E$ is the graph edge set. 
On the contrary, \alg\ can typically be run in {\em constant} amortized time per prediction by using Wilson's algorithm for sampling random 
spanning trees~\cite{Wil96}. By disregarding edge weights in the initial sampling phase, this algorithm
is able to draw a random (unweighted) spanning tree in time proportional 
to $|V|$ on most graphs. Our experiments reveal that using the edge weights only in the subsequent 
prediction phase causes in practice only a minor performance degradation.

\section{Preliminaries and basic notation}
\label{sec:prel}
%
%\vspace{-0.15in}
Let $T = (V,E,W)$ be an undirected and weighted tree with $|V|=n$ nodes,
positive edge weights $W_{i,j} > 0$ for $(i,j) \in E$, and $W_{i,j} = 0$ for 
$(i,j) \notin E$.
A binary labeling of $T$ is any assignment $\by = (y_1,\dots,y_n) \in \spin^n$
of binary labels to its nodes. We use $(T,\by)$ to denote the resulting
labeled weighted tree.
The online learning protocol for predicting $(T,\by)$ is defined as
follows. The learner is given $T$ while $\by$ is kept hidden.
The nodes of $T$ are presented to the learner one by one, according
to an unknown and arbitrary permutation $i_1,\dots,i_n$ of $V$.
At each time step $t=1,\dots,n$ node $i_t$ is presented
and the learner must issue a prediction $\yhat_{i_t}\in\spin$ for the 
label $y_{i_t}$. Then
$y_{i_t}$ is revealed and the learner knows whether a mistake occurred.
The learner's goal is to minimize the total number of prediction mistakes.

Following previous works \cite{HPR09,hlp08,CBGV09,CBGVZ10,cgvz10}, 
we measure the regularity of a labeling $\by$ of $T$ 
in terms of $\phi$-edges, where a $\phi$-edge for $(T,\by)$
is any $(i,j) \in E$ such that $y_i \neq 
y_j$. The overall amount of irregularity in a labeled tree $(T,\by)$ 
is the \textbf{weighted cutsize} 
$\Phi^W = \sum_{(i,j) \in E^{\phi}} W_{i,j}$, where $E^{\phi} \subseteq E$ is the 
subset of $\phi$-edges in the tree. We use the weighted cutsize as our learning bias, 
that is, we want to design algorithms whose predictive performance scales with 
$\Phi^W$. Unlike the $\phi$-edge count $\Phi = |E^{\phi}|$, which is a good measure of 
regularity for unweighted graphs, the weighted cutsize takes the edge weight $W_{i,j}$ into 
account\footnote
{ 
The weight value $W_{i,j}$ typically encodes the strength
of the connection $(i,j)$. In fact, when the nodes of a graph host more 
information than just binary labels, e.g., a vector of feature velues,
then a reasonable choice is to set $W_{i,j}$ to be some (decreasing) function 
of the distance between the feature vectors sitting at the two nodes $i$ and $j$
%(e.g.,\cite{gharamanilaffertyetc})
. See also Remark \ref{r:2}.
%, as well as Section \ref{s:experiments}.
} 
when measuring the irregularity of a $\phi$-edge $(i,j)$.
In the sequel, when we measure the distance between any pair of nodes $i$ and $j$ on the input tree $T$
we always use the resistance distance metric $d$, that is, $d(i,j) = \sum_{(r,s)\in\pi(i,j)} \tfrac{1}{W_{r,s}}$, where $\pi(i,j)$ is the unique path connecting $i$ to $j$.
%, i.e. the distance between $i$ and $j$
%is equal to the sum of the weights reciprocals of all edges included on the path
%connecting $i$ with $j$.

\iffalse
graph 

cluster: maximal connected subgraph of $G$ (or $T$) uniformly labeled.

"closest": the default metric distance is the resistance distance metric on the tree.

hinge-node: labeled node or fork (come in treeopt)

hinge-line: path connecting two hinge nodes such that no internal node is a hinge-node (come in treeopt)

hinge-tree: each of the component of the forest obtained by removing all edges incident to the hinge nodes.
(quasi come in SEL-PRED. In SEL-PRED la foresta era ottenuta eliminando anche gli hinge-nodes; definendo le cose in questo modo semplifichiamo le cose per questo paper)

$H(i)$: hinge-tree containing $i$.

A connection node of a hinge-tree $H$ is any hinge node adjacent to one of node of $H$. (come in SEL-PRED)

fork: unlabeled node connected by (at least) three edge-disjoint path to (at least) three distinct labeled nodes (come in SEL-PRED).

Connection fork: connection node which is fork.

$\pi(i,j)$ = path connecting node $i$ and $j$.

$\phi$-free edge.

$\yhat_{i}$: predicted label of $i$.

\fi

%\vspace{-0.1in}
\section{A lower bound for weighted trees}\label{s:lower}
%
%\vspace{-0.1in}
In this section we show that the weighted cutsize can be used as a lower bound on the 
number of online mistakes made by any algorithm on any tree. In order to do so
(and unlike previous papers on this specific subject ---see, e.g., \cite{CBGVZ10}), 
we need to introduce a more refined notion of adversarial ``budget".
Given $T = (V,E,W)$, let $\xi(M)$ be the maximum number of edges of $T$ such 
that the sum of their weights does not exceed $M$,
\(
\xi(M) = \max\theset{|E'|}{E' \subseteq E,\; \sum_{(i,j) \in E'} w_{i,j} \le M}~.
\)
We have the following simple lower bound (all proofs are omitted from this extended abstract).
%, whose proof is provided in 
% the supplementary material to this paper.
%
\begin{theorem}
\label{th:lb}
For any weighted tree $T = (V,E,W)$ there exists a randomized label assignment to $V$
such that any algorithm can be forced to make at least $\xi(M)/2$ online mistakes in expectation, while $\Phi^W \le M$.
\end{theorem}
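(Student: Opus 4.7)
The plan is to build a ``hardest'' labeling by taking an edge set of maximum cardinality whose total weight fits into the budget $M$, using it to partition the tree into blocks, then assigning each block a uniformly random label. This simultaneously controls $\Phi^W$ and forces many mistakes.

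Concretely, I first pick $E^\star \subseteq E$ attaining the maximum in the definition of $\xi(M)$, so that $|E^\star|=\xi(M)$ and $\sum_{(i,j)\in E^\star} W_{i,j}\le M$. Removing $E^\star$ from the tree $T$ yields a forest with exactly $\xi(M)+1$ connected components $C_1,\dots,C_{\xi(M)+1}$. I then define the adversarial labeling as follows: draw $Y_1,\dots,Y_{\xi(M)+1}$ independently and uniformly from $\{-1,+1\}$, and set $y_i = Y_k$ for every $i\in C_k$. By construction, every $\phi$-edge must lie in $E^\star$ (since inside each $C_k$ all labels agree), so $\Phi^W \le \sum_{(i,j)\in E^\star} W_{i,j} \le M$, independent of the random draw.

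For the mistake lower bound, I let the adversary present, in the first $\xi(M)+1$ rounds, one representative node from each of the components $C_1,\dots,C_{\xi(M)+1}$. Before round $k$, the learner has observed only labels from components $C_1,\dots,C_{k-1}$; since the $Y_k$'s are mutually independent, the posterior of $Y_k$ given this history is still uniform on $\{-1,+1\}$. Hence any prediction $\widehat y_{i_k}\in\{-1,+1\}$ (even randomized and even allowed to depend on the whole tree structure) satisfies $\Pr(\widehat y_{i_k}\neq y_{i_k}) = 1/2$. By linearity of expectation, the expected number of mistakes made in these rounds alone is $(\xi(M)+1)/2 \ge \xi(M)/2$, which finishes the proof.

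The argument is essentially combinatorial and I do not expect a real technical obstacle; the only point that deserves care is verifying that the random labeling is well-defined as a distribution over labelings (rather than as a function of $E^\star$) and that the information-theoretic independence argument applies uniformly across all online algorithms, including randomized ones. This is handled by noting that the representatives are chosen by the adversary after fixing $E^\star$ but before the labels are drawn, so the learner's behavior on the first $k-1$ rounds is a measurable function of $Y_1,\dots,Y_{k-1}$ only, which is independent of $Y_k$.
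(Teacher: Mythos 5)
Your proposal is correct and follows essentially the same route as the paper's proof: remove a maximum-cardinality edge set of weight at most $M$, label the resulting $\xi(M)+1$ components with independent uniform random signs, and observe that the first node revealed in each component is mispredicted with probability $1/2$, giving $(\xi(M)+1)/2 > \xi(M)/2$ expected mistakes while $\Phi^W \le M$ holds by construction. The only difference is that you have the adversary explicitly schedule one representative per component up front, whereas the paper's (terser) argument notes the half-mistake-per-component bound holds for any presentation order; both are valid.
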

Specializing~\cite[Theorem~1]{CBGVZ10} to trees gives the lower bound $K/2$ under the 
constraint $\Phi \le K \le |V|$. The main difference between the two bounds is the measure 
of label regularity being used: Whereas Theorem~\ref{th:lb} uses $\Phi^W$, which depends on 
the weights, \cite[Theorem~1]{CBGVZ10} uses the weight-independent quantity 
$\Phi$. This dependence of the 
lower bound on the edge weights is consistent with our learning bias, stating that a heavy 
$\phi$-edge violates the bias more than a light one.
Since $\xi$ is nondecreasing, the lower bound implies a number of 
mistakes of at least $\xi(\Phi^W)/2$. Note that $\xi(\Phi^W) \ge \Phi$ for 
any labeled tree $(T,\by)$. Hence, whereas a 
constraint $K$ on $\Phi$ implies forcing at least $K/2$ mistakes, a constraint $M$ on 
$\Phi^W$ allows the adversary to force a potentially larger number of mistakes.

In the next section we describe an algorithm whose mistake bound nearly matches the
above lower bound on any weighted tree when using $\xi(\Phi^W)$ as the 
measure of label regularity.

\section{The Shazoo algorithm}
\label{s:alg}
%
%\vspace{-0.1in}
In this section we introduce the \alg\ algorithm, and relate it to 
previously proposed methods for online prediction on unweighted trees
(\treeopt\ from \cite{CBGV09}) and weighted line graphs (\wta\ from \cite{CBGVZ10}).
In fact, \alg\ is optimal on any weighted tree, and reduces to \treeopt\ on unweighted 
trees and to \wta\ on weighted line graphs. Since \treeopt\ and \wta\ are 
optimal on \textit{any} unweighted tree and \textit{any} weighted line graph, 
respectively, \alg\ necessarily contains elements of both of these algorithms.

In order to understand our algorithm, we now define some relevant 
structures of the input tree $T$. See Figure\ \ref{fig:hinge-trees_shazoo} (left) 
for an example.
These structures evolve over time according to the set of observed labels. 
First, we call \textbf{revealed} a 
node whose label has already been observed by the online learner; otherwise, a node is \textbf{unrevealed}.
% a node whose label is still hidden.
A \textbf{fork} is any unrevealed node connected to at least three different 
revealed nodes by edge-disjoint paths. A \textbf{hinge node} is either
a revealed node or a fork. A \textbf{hinge tree} is any component of the 
forest obtained by removing from $T$ all \textsl{edges} incident to hinge 
nodes; hence any fork or labeled node forms a $1$-node hinge tree. 
When a hinge tree $H$ contains only one hinge node, a \textbf{connection node} for $H$ is 
the node contained in $H$. In all other cases, we call a connection node for $H$
any node outside $H$ which is adjacent to a node in $H$. 
A \textbf{connection fork} is a connection node which is also a fork.
Finally, a \textbf{hinge line} is any path connecting two hinge nodes such that 
no internal node is a hinge node. 
\begin{figure}[h!]
\begin{center}
\includegraphics[scale=0.45]{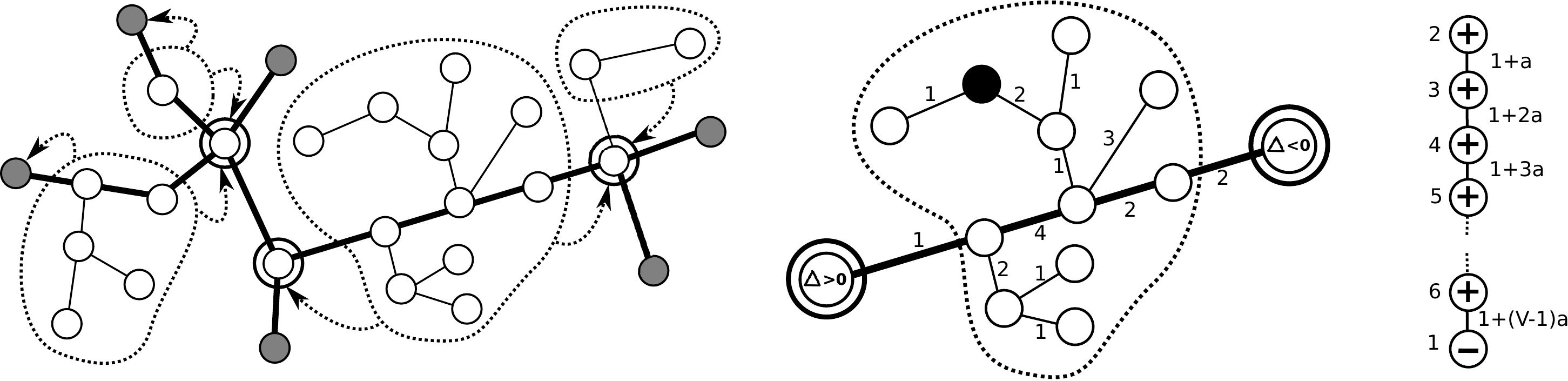}
\end{center}
%\vspace{-0.15in}
\caption{\label{fig:hinge-trees_shazoo} 
\textbf{Left:} An input tree. Revealed nodes are dark grey, forks are doubly circled,
and hinge lines have thick black edges.
The hinge trees not containing hinge nodes (i.e., the ones that are not singletons) 
%containing  forks or labeled nodes
are enclosed by dotted lines. The dotted arrows point to the connection node(s) of such hinge trees.
\textbf{Middle:} The predictions of \alg\ on the nodes of a hinge tree. 
The numbers on the edges denote edge weights. 
At a given time $t$, \alg\ uses the value of $\Delta$
on the two hinge nodes (the doubly circled ones, which are also forks in this case), 
and is required to issue a prediction on node $i_t$
(the black node in this figure).
Since $i_t$ is between a positive $\Delta$ hinge node and a negative $\Delta$ hinge node, 
\alg\ goes with the one which is closer in resistance distance, hence predicting $\yhat_{i_t} = -1$. 
\textbf{Right:} A simple example where the mincut prediction strategy does not work well
in the weighted case. In this example,
mincut mispredicts all labels, yet $\Phi=1$, and the ratio of $\Phi^W$ to the 
total weight of all edges is about $1/|V|$. The labels to be predicted are presented according to the 
numbers on the left of each node. Edge weights are also displayed, where $a$ is a very small constant.
}
%\vspace{-0.15in}
\end{figure}

Given an unrevealed node $i$ and a label value $y\in\spin$, the \textbf{cut function} 
$\cut(i,y)$ is the value of the minimum weighted cutsize of $T$ over all 
labelings $\by \in \spin^n$ consistent with the labels seen so far and such 
that $y_i=y$. 
Define $\Delta(i) = \cut(i,-1)-\cut(i,+1)$ if $i$ is unrevealed, 
and $\Delta(i) = y_i$, otherwise.
The algorithm's pseudocode is given in Algorithm \ref{alg:shazoo}.
At time $t$, in order to predict the label $y_{i_t}$ of node $i_t$, 
\alg\ calculates $\Delta(i)$ for all connection nodes $i$ of $H(i_t)$, where
$H(i_t)$ is the hinge tree containing $i_t$. Then the algorithm predicts
$y_{i_t}$ using the label of the connection node $i$ of $H(i_t)$ 
which is closest to $i_t$ and such that $\Delta(i) \neq 0$ (recall from Section~\ref{sec:prel} that all distances/lengths are measured using the resistance metric). Ties are broken arbitrarily.
If $\Delta(i) = 0$ for all connection nodes $i$ in $H(i_t)$ then 
\alg\ predicts a default value ($-1$ in the pseudocode).
%
%
\iffalse
******************************************************************
\begin{algorithm}
\caption{\alg}
\begin{algorithmic}
\FOR{$t=1$ to $n$}
    \STATE Let $C\bigl(H(i_t)\bigr)$ be the set of the connection nodes $i$ of $H(i_t)$ for which $\Delta(i) \neq 0$
    \IF{$C\bigl(H(i_t)\bigr) \not\equiv \emptyset$}
    \STATE Let $j$ be the node of $C\bigl(H(i_t)\bigr)$ closest to $i_t$
    \STATE Set $\yhat_{i_t} = \sgn\bigl(\Delta(j)\bigr)$
    \ELSE Set $\yhat_{i_t} = -1$
    \ENDIF
    \ENDFOR
\end{algorithmic}
\label{alg:shazoo}
\end{algorithm}
******************************************************************
\fi
%
%
%\begin{figure}[h!]
%\begin{center}
%\fbox{ \hspace*{0em}
%\begin{minipage}{0.9\textwidth}
%%
%\textbf{Algorithm} \alg\ 
%
%\medskip\noindent
%\textbf{Input} : Weighted tree $T$, revealed node labels $y_{i_1},\dots,y_{i_{t-1}}$,
%selected node $i_t$.
%
%\medskip\noindent
%.
%\smallskip\noindent
%
%\medskip\noindent
%\textbf{If} $C(H(i_t)) \not\equiv \emptyset$ \textbf{then} 
%$\yhat_{i_t} \leftarrow \sgn(\Delta(j))$, where $j$ is the closest node of $C(H(i_t))$ to $i_t$.
%
% \textbf{Else}   $\yhat_{i_t} \leftarrow -1$.
%
%%
%\end{minipage}
%}
%\end{center}
%\end{figure}
%
If $i_t$ is a fork (which is also a hinge node), 
then $H(i_t) = \{i_t\}$. In this case, $i_t$ is a connection node of 
$H(i_t)$, and obviously the one closest to itself. 
Hence, in this case \alg\ predicts $y_t$ simply by
$\yhat_{i_t} = \sgn\bigl(\Delta(i_t)\bigr)$. See Figure \ref{fig:hinge-trees_shazoo} (middle) for an example.
%
%\vspace{-0.1in}
%% ----------------------------------------------------------------------------
\begin{algorithm2e}[h]
\SetKwSty{bf} 
\SetKwFor{For}{for}{}{}
\SetKwIF{If}{ElseIf}{Else}{if}{}{else if}{else}{}
\SetKwFor{While}{while}{}{}
\For{$t=1 \ldots n$}
{
Let $C\bigl(H(i_t)\bigr)$ be the set of the connection nodes 
$i$ of $H(i_t)$ for which $\Delta(i) \neq 0$\\
\eIf{$C\bigl(H(i_t)\bigr) \not\equiv \emptyset$}
{
Let $j$ be the node of $C\bigl(H(i_t)\bigr)$ closest to $i_t$\\
Set $\yhat_{i_t} = \sgn\bigl(\Delta(j)\bigr)$
}
{
%\vspace{-0.143in}
\ Set $\yhat_{i_t} = -1$ (default value)
%\vspace{-0.15in}
}
%\vspace{-0.35in}
}
\caption{\alg}
\label{alg:shazoo}
\end{algorithm2e}
%\vspace{-0.2in}
%% ----------------------------------------------------------------------------
%\vspace{-0.3in}
%
%\vspace{-0.1in}
On unweighted trees, computing $\Delta(i)$ for a connection node $i$ 
reduces to the Fork Label Estimation Procedure in \cite[Lemma 13]{CBGV09}. 
On the other hand, predicting with the label of the connection node closest to $i_t$ in 
resistance distance is reminiscent of the nearest-neighbor prediction of \wta\ on weighted line graphs 
\cite{CBGVZ10}.
In fact, as in \wta, this enables to take advantage of labelings whose
$\phi$-edges are light weighted.
An important limitation of \wta\ is that this algorithm linearizes the input tree. 
On the one hand, this greatly simplifies the analysis of nearest-neighbor prediction; 
on the other hand, this prevents exploiting the structure of $T$, 
thereby causing logaritmic slacks in the upper bound of \wta. The \treeopt\ algorithm, instead, 
performs better when the unweighted input tree is very different from a line graph (more 
precisely, when the input tree cannot be decomposed into long edge-disjoint paths, 
e.g., a star graph). 
Indeed, \treeopt's upper bound does not suffer from logaritmic slacks, and is tight 
up to 
constant factors on any unweighted tree. Similar to \treeopt, \alg\ does not linearize the 
input tree and extends to the weighted case \treeopt's superior performance, also confirmed  
by the experimental comparison reported in Section \ref{s:exp}.

In Figure \ref{fig:hinge-trees_shazoo} (right)  
we show an example that highlights the importance of using the $\Delta$ function 
to compute the fork labels. Since $\Delta$ predicts a fork $i_t$ with the label that minimizes 
the weighted cutsize of $T$ consistent with the revealed labels, one may wonder whether 
computing $\Delta$ through mincut based on the number of $\phi$-edges 
(rather than their weighted sum) could be an effective prediction strategy.
Figure \ref{fig:hinge-trees_shazoo} (right) illustrates an example of
a simple tree where such a $\Delta$ mispredicts the labels of 
all nodes, when both $\Phi^W$ and $\Phi$ are small. 

\begin{remark}\label{r:1}
We would like to stress that \alg\ can also be used to predict the nodes of an arbitrary 
%given (possibly signed) 
{\em graph} by first drawing a random spanning tree $T$ of the graph, and then 
predicting optimally on $T$ ---see, e.g.,~\cite{CBGV09,CBGVZ10}. The resulting mistake bound 
is simply the expected value of \alg's mistake bound over the random draw of $T$.
By using a fast spanning tree sampler~\cite{Wil96}, the involved computational overhead 
amounts to constant amortized time per node prediction on ``most'' graphs. 
\end{remark}
%
%\vspace{-0.07in}
\begin{remark}\label{r:2}
In certain real-world input graphs, the presence of an edge 
linking two nodes may also carry
%
%need not carry information about the extent to which $i$ and 
%$j$ are similar, butit may also indicate 
%
information about the extent to which the two nodes are {\em dissimilar}, rather than similar. 
This information can be encoded by the sign of the weight, and the resulting network is called 
a \textsl{signed graph}. The regularity measure is naturally extended to signed graphs by 
counting the weight of \textsl{frustrated edges}~(e.g.,\cite{IA10}), where $(i,j)$ is 
frustrated if $y_i y_j \neq \sgn(w_{i,j})$. Many of the existing algorithms for node 
classification \cite{ZGL03,hlp08,HPR09,CBGV09,HL09,CBGVZ10} can in principle be run on signed graphs. 
However, the computational cost may not always be preserved. For example, mincut~\cite{bc01}
is in general NP-hard when the graph is signed~\cite{MCRG03}.
% Easy and difficult objective functions for max cut
Since our algorithm sparsifies the graph using trees, 
it can be run efficiently even in the signed case. 
We just need to re-define the $\Delta$ function as 
$\Delta(i) = \fcut(i,-1) - \fcut(i,+1)$, 
where $\fcut$ is the minimum total weight of frustrated edges consistent with the labels seen 
so far. The argument contained in Section~\ref{s:impl} for the positive edge weights 
(see, e.g., Eq. (\ref{eq:cut}) therein) allows us to show that also this version of $\Delta$ can be computed 
efficiently. The prediction rule has to be re-defined as well: We count the parity 
of the number $z$ of negative-weighted edges along the path connecting $i_t$ to the closest 
node $j \in C\bigl(H(i_t)\bigr)$, i.e., $\yhat_{i_t} = (-1)^z \sgn\bigl(\Delta(j)\bigr)$. 
\end{remark}
%
%\vspace{-0.07in}
\begin{remark}
In~\cite{CBGV09} the authors note that \treeopt\ approximates a version space (Halving) algorithm on the set of tree labelings. Interestingly, \alg\ is also an approximation to a more general Halving algorithm for weighted trees. 
This generalized Halving gives a weight to each labeling consistent with the labels seen so far and with the sign 
of $\Delta(f)$ for each fork $f$. These weighted labelings, which depend on the weights of the $\phi$-edges 
generated by each labeling, are used for computing the predictions. One can show (details omitted due to 
space limitations) that this generalized Halving algorithm has a mistake bound within a constant factor 
of \alg's.
\end{remark}

\section{Mistake bound analysis and implementation}
\label{s:impl}
\vspace{-0.1in}
\newcommand{\outbd}[1]{\partial{#1}}
\newcommand{\oC}{\overline{C}}
\newcommand{\oc}{\overline{C}}
\newcommand{\oD}{\overline{\Delta}}
\newcommand{\scL}{\mathcal{L}}
\newcommand{\cFF}{C^{F'}}
\newcommand{\GT}{\Gamma^T}
\newcommand{\scT}{\mathcal{T}}
We now show that \alg\ is nearly optimal on every weighted tree $T$. We obtain an upper 
bound in terms of $\Phi^W$ and the structure of $T$, nearly matching the lower bound 
of Theorem~\ref{th:lb}. 
%up to constant factors.
% Due to its length, the proof is contained in the supplementary material to this paper.
We now give some auxiliary notation that is strictly
needed for stating the mistake bound. 

Given a labeled tree $(T,\by)$, a \textbf{cluster} is any maximal subtree whose nodes have 
the same label. An \textbf{in-cluster line graph} is any line graph that is entirely 
contained in a single cluster. Finally, given a line graph $L$, we set
$R^W_L = \sum_{(i,j) \in L} \tfrac{1}{W_{i,j}}$, i.e., the (resistance) distance between its terminal nodes.
%
%Some of them are borrowed/adapted from \cite{treeopt,wta}.
%
\begin{theorem}
\label{th:ub}
For any labeled and weighted tree $(T, \by)$, there exists a set $\scL_T$ of $\scO\bigl(\xi(\Phi^W)\bigr)$ edge-disjoint in-cluster line graphs such that 
the number of mistakes made by \alg\ is at most of the order of
\vspace{-0.05in}
\[
	\sum_{L \in \scL_T} \min\Bigl\{ |L|, 1 +\bigl\lfloor \log\bigl(1 + \Phi^W R^W_L\bigr) \bigr\rfloor \Bigr\}~.
\]
\end{theorem}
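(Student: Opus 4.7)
The plan is to partition the mistakes of \alg\ into \textbf{fork mistakes} (made on a node that is a fork at prediction time) and \textbf{hinge-line mistakes} (made on a non-fork unrevealed node sitting strictly inside some hinge line), and then construct $\scL_T$ so that every mistake is charged to some $L\in\scL_T$. Fork mistakes I would absorb into $\scL_T$ as single-node line graphs through a potential argument in the spirit of \cite{CBGV09}: since \alg\ predicts a fork $f$ by $\sgn\bigl(\Delta(f)\bigr)$ and $\Delta(f)$ is defined via the weighted mincut consistent with the history, a mistake on $f$ must introduce at least one new $\phi$-edge into every subsequent mincut-consistent labeling. Charging each fork mistake to a distinct positive-weight $\phi$-edge of the true labeling then bounds the number of fork mistakes by $\scO\bigl(\xi(\Phi^W)\bigr)$.

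For hinge-line mistakes I would build $\scL_T$ by taking, for each cluster $C$ of $(T,\by)$, the maximal subpaths of $T$ inside $C$ that are ever traversed by some hinge line during the run of the algorithm. These line graphs are pairwise edge-disjoint, because distinct clusters share no edge and the subpaths within a cluster are chosen disjointly by construction. The endpoints of each $L\in\scL_T$ are either forks, cluster-boundary $\phi$-edges, or initially revealed nodes; the total count of such endpoints is $\scO\bigl(\xi(\Phi^W)\bigr)$, yielding $|\scL_T|=\scO\bigl(\xi(\Phi^W)\bigr)$.

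The main technical obstacle is bounding the mistakes \alg\ makes inside a single $L\in\scL_T$ by $\min\bigl\{|L|,\,1+\lfloor\log(1+\Phi^W R^W_L)\rfloor\bigr\}$. The bound $|L|$ is immediate. For the logarithmic bound I would adapt the nearest-neighbor halving argument of \wta: at any mistake on an internal node $i\in L$, the revealed connection node nearest to $i$ in resistance distance with nonzero $\Delta$ has the wrong sign. Since $L$ is in-cluster it contains no $\phi$-edges of $\by$, so any disagreement between the $\Delta$-signs at the two endpoints of $L$ must originate from cut weight of edges outside $L$; this supplies the quantitative bias at which the halving terminates, because once the contested resistance-length along $L$ drops below $\scO(1/\Phi^W)$ the $\Delta$-signs at its ends can no longer disagree. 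Consequently each successive mistake roughly halves the remaining resistance-length of the contested segment, producing at most $1+\lfloor\log(1+\Phi^W R^W_L)\rfloor$ mistakes on $L$. The subtlety I expect to spend most effort on is ruling out oscillations of the $\Delta$-sign along $L$ (so that bisection applies cleanly) and controlling the additive bias produced by the external cut weight when it is comparable to $1/R^W_L$. Summing the fork-mistake count together with the per-line bounds over $\scL_T$ then yields the theorem.
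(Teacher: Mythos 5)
Your high-level skeleton coincides with the paper's: decompose mistakes by cluster, bound the number of relevant in-cluster lines by $\scO\bigl(\xi(\Phi^W)\bigr)$, and run a doubling argument on each line that terminates once the contested resistance length drops below $1/\Phi^W$; that last part is essentially the paper's per-line lemma. But two of your steps would fail as written. The first is the count $|\scL_T|=\scO\bigl(\xi(\Phi^W)\bigr)$. Taking ``maximal subpaths ever traversed by some hinge line'' and counting their endpoints is circular: hinge lines are created and subdivided at essentially every revelation (so there are $\Theta(|V|)$ of them over a run), and the number of forks that ever appear is precisely the quantity you still need to bound. The paper instead counts only the reference-node directions actually used at mistake times, and bounds these by showing that each creation of a fork with $\Delta(f)\le 0$, and each new reference direction, forces the mincut consistent with the revealed labels to increase by at least the weight of the lightest edge on the newly created hinge line; since the total increase is at most $\Phi^W$, the definition of $\xi$ converts this weight budget into a count. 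Your alternative charge of each fork mistake to ``a distinct positive-weight $\phi$-edge of the true labeling'' cannot work, because the number of mispredicted forks can exceed $\Phi$; only the $\xi(\Phi^W)$-type charging against arbitrary light edges is available.

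The second, more fundamental gap is that you defer ``ruling out oscillations of the $\Delta$-sign'' to the per-line step, whereas this oscillation is the central new difficulty of the weighted case and already breaks your line count: a fork can have $\Delta(f)\le 0$, then $>0$, then $\le 0$ again as cluster labels are revealed, so ``charge each bad fork once at creation'' is not available without further work. The paper resolves this with the monotonicity lemma $\oD(f)\le\Delta(f)$ (the value of $\Delta(f)$ when all of $\oC\setminus C$ is revealed lower-bounds its value at every step), which licenses passing to the worst-case revelation order in which all boundary nodes are revealed first; under that order a fork whose $\Delta$ is positive at creation stays positive forever, and the charging becomes injective. You would need this lemma or a substitute. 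Finally, the natural lines on which the doubling argument operates (the reference lines) are \emph{not} edge-disjoint; the paper needs an explicit surgery --- organizing the r-lines into a forest and splitting overlapping ones --- to extract the edge-disjoint family $\scL_T$ while preserving the sum of the per-line bounds, whereas your construction simply asserts disjointness.
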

\vspace{-0.1in}
The above mistake bound depends on the tree structure through $\scL_T$. 
The sum contains
$\scO\bigl(\xi(\Phi^W)\bigr)$ terms, each one being 
at most logarithmic in the scale-free products $\Phi^W R_L^W$. 
The bound is governed by the same key quantity $\xi\bigl(\Phi^W\bigr)$ 
occurring in the lower bound of Theorem~\ref{th:lb}.
However, Theorem~\ref{th:ub} also shows that \alg\ can take advantage of 
trees that 
cannot be covered by 
long line graphs. For example, if the input tree $T$ is a weighted line 
graph, then
it is likely to contain long in-cluster lines. Hence, the factor multiplying 
$\xi\bigl(\Phi^W\bigr)$ may be of the order of 
% \footnote{
% It is possible to derive this transformation of the mistake bound choosing, as worst case, 
% an input line graph such that for all $L \in \scL_T$ we have $|L| \ge 1 +\Bigl\lfloor \log 
% \Bigl(1 +\Phi^W_T(\by) R^W_L\Bigr) \Bigr\rfloor$, and using Jensen's inequality with the 
% $\log(\cdot)$ function like in the mistake bound analysis of \wta (see [wta]).
% }
$\log\bigl(1 +\Phi^W R^W_L\bigr)$.
%where $R^W_T$ is the sum of reciprocals of all $\phi$-free edges in $T$.  
If, instead, $T$ has constant diameter (e.g., a star graph), then the in-cluster 
lines can only contain a constant number of nodes, and the number of mistakes 
can never exceed
$\scO\bigl(\xi(\Phi^W)\bigr)$. This is a log factor improvement over \wta\ which, by
its very nature, cannot exploit the structure of the tree it operates on.\footnote
{
One might wonder whether an arbitrarily large gap between upper (Theorem \ref{th:ub}) 
and lower (Theorem \ref{th:lb}) bounds exists due to the extra factors 
depending on $\Phi^W R^W_L$. One way to get around this is 
to follow the analysis of \wta\ in~\cite{CBGVZ10}.
%A similar point was made in the analysis of \wta\ in~\cite{CBGVZ10}. 
Specifically, we can adapt here 
the more general analysis from that paper (see Lemma 2 therein) that allows us
to drop, for any integer $K$, the resistance contribution of $K$ arbitrary non-$\phi$ 
edges of the line graphs in $\scL_T$
(thereby reducing $R_L^W$ for any $L$ containing any of these edges) at the cost of increasing the 
mistake bound by $K$. The details will be given in the full version of this paper.
} 

%%%%%%%%%%%%%%%%%%%%%%%%%%%%%%%%%%%%%%%%%%%%%%%%%%%%%%%%%%%%%%%%%

%\section{Implementation}
%\label{s:impl}
%\vspace{-0.1in}
%
%We now explain how to implement the \alg\ algorithm. 
%Since many real world node classification problems are defined in the batch setting, i.e. the goal is 
%to predict \text{all} the labels of a pool of nodes given
%all the remaining ones, we also develop a batch implementation of our algorithm.
%
As for the implementation, we start by describing a method for calculating $\cut(v,y)$ for any unlabeled node $v$ and label 
value $y$.
Let $T^{v}$ be the maximal subtree of $T$ rooted at $v$, such that no internal node is revealed.
% Hence, the leaves of $T^{v}$  are the only nodes of $T^{v}$ whose label may be revealed at this 
%time. 
For any node $i$ of $T^{v}$, let $T^{v}_i$ be the subtree of $T^{v}$ rooted at $i$.
% By definition of $T^{v}_i$, at this time the algorithm does not know any label of its internal 
%nodes. 
Let $\Phi_i^{v}(y)$ be the minimum weighted 
cutsize of $T^{v}_i$ consistent with the revealed nodes and such 
that $y_i=y$.
% conditioned on the the fact that label $y_i$ is revealed and it is equal to $y$. 
Since $\Delta(v) = \cut(v,-1) - \cut(v,+1) = \Phi_v^{v}(-1) - \Phi_v^{v}(+1)$,
our goal is to compute $\Phi_v^{v}(y)$.  
% Let now $C^{v}(i)$ the set of all children of $i$ in $T^{v}$.
% It is easy to see that $\Phi^{v}_i(y)$ depends on $\Phi^{v}_j(+1)$ and $\Phi^{v}_j(-1)$ for each $j 
%\in C^{v}(i)$.
It is easy to see by induction that the quantity $\Phi^{v}_i(y)$ can be recursively defined 
as follows, where $C^{v}_i$ is the set of all children of $i$ in $T^{v}$, and $Y_j \equiv \{y_j\}$ if $y_j$ is 
revealed, and $Y_j \equiv \{-1,+1\}$, otherwise:\footnote{
The recursive computations contained in this section are reminiscent of the sum-product 
algorithm~\cite{kfl01}.
}
\begin{equation}\label{eq:cut}
    \Phi^{v}_i(y) = \left\{ \begin{array}{cl}
    {\displaystyle \sum_{j \in C^v_i} \min_{y' \in Y_j} \Bigl(\Phi^{v}_j(y')+ \Ind{y' \neq y} 
w_{i,j}\Bigr) } & \text{if $i$ is an internal node of $T^{v}$}
\\
    0 & \text{otherwise.}
    \end{array} \right.
\end{equation}
%
%\begin{lemma}
%For each tree $T$, label $y \in \{-1,+1\}$ and node $i$ of $T^{v}$, $\Phi_i^{v}(y)$ satisfies
%\[
%\Phi^{v}_i(y) = 0
%\]
%when $i$ is a leaf of $T^{v}$.
%\begin{equation}
%\Phi^{v}_i(y) = \sum_{j \in C^{v}(i)} \min_{y' \in Y_j} (\Phi^{v}_j(y')+ \mathbb{I}_{y' \neq y} 
%w_{i,j})
%\label{cut}
%\end{equation}
%when $i$ is an internal node of $T^{v}$, 
%where $Y_j \equiv \{y_j\}$ if $y_j$ is already revealed, and $Y_j  \equiv \{-1,+1\}$ otherwise. 
%\end{lemma}
%\begin{proof}
%by induction on the height of $i$...
%
%Term $\mathbb{I}_{y' \neq y} w_{i,j}$ takes into account the contribution of edge $(i,j)$ to the 
%cutsize.
%
%\end{proof}
%
Now, $\Phi_v^{v}(y)$ can be computed through a simple depth-first visit of $T^{v}$. In all backtracking 
steps of this visit the algorithm uses (\ref{eq:cut}) to
compute $\Phi^{v}_i(y)$ for each node $i$, 
the values $\Phi^{v}_j(y)$ for all children $j$ of $i$ being calculated during the 
previous backtracking steps. The total running time is therefore linear in the number of nodes
of $T^v$.

Next, we describe the basic implementation of \alg\ for the on-line setting. A batch learning
implementation will be given at the end of this section. %Section \ref{ss:batchimpl}.
The online implementation is made up of three steps.

\textbf{1. Find the hinge nodes of subtree} $T^{i_t}$. 
%Our first objective is finding all hinge-nodes of $T^{i_t}$. 
Recall that a hinge-node is either a fork or a revealed node. 
Observe that a fork is incident to at least three nodes lying on different hinge 
lines. Hence, in this step we perform a depth-first
visit of $T^{i_t}$, marking each node lying on a hinge line. In order to accomplish this task,
it suffices to single out all forks marking each labeled node and, recursively, each parent of a 
marked node of $T^{i_t}$. At the end of this process we are able to single out 
the forks by counting the number of edges $(i,j)$ of each marked node $i$ 
such that $j$ has been marked, too. The remaining hinge nodes are the leaves of 
$T^{i_t}$ whose labels have currently been revealed. 

\textbf{2. Compute} $\sgn(\Delta(i))$ \textbf{for all connection forks of $H(i_t)$}. 
From the previous step we can easily find the connection node(s) of $H(i_t)$.
Then, we simply exploit the above-described technique for computing
the cut function, obtaining $\sgn(\Delta(i))$ for all connection forks $i$ of $H(i_t)$.

\textbf{3. Propagate the labels of the nodes of $C(H(i_t))$ (only if $i_t$ is not a fork)}. 
We perform a visit of $H(i_t)$ starting from every node $r \in C(H(i_t))$.
During these visits, we mark each node $j$ of $H(i_t)$ with
the label of $r$ computed in the previous step, together
with the length of $\pi(r,j)$, which is what we need for predicting any label
of $H(i_t)$ at the current time step.

%It may perhaps appear overkilling to calculate the cut function for all forks in $T^{i_t}$ in the 
%second step and to propagate the labels of all hinge-nodes in the third step. However, we show this 
%description of the algorithm for the sake of simplicity
%and because speeding up the process of the last two steps would not improve asymptotically the worst 
%case time complexity of the whole procedure, which would remain linear in $|V(T^{i_t})|$. In fact, 

The overall running time is dominated by the first step and the calculation of $\Delta(i)$. 
Hence the worst case running time is proportional to $\sum_{t \le |V|} |V(T^{i_t})|$. 
This quantity can be quadratic in $|V|$, though
this is rarely encountered in practice if the node presentation order is not adversarial.
% (e.g., if it is random).
For example, it is easy to show that in a line graph, 
if the node presentation order is random, then the total time is of the order of $|V| \log |V|$. 
For a star graph the total time complexity
is always linear in $|V|$, even on adversarial orders.

In many real-world scenarios, one is interested in the more standard problem of 
predicting the labels of a given subset of {\em test} nodes based on the available
labels of another subset of {\em training} nodes.  
Building on the above on-line implementation, we now derive an
implementation of \alg\ for this train/test (or ``batch learning'') 
setting. 
We first  show that computing $|\Phi_i^i(+1)|$ and $|\Phi_i^i(-1)|$ for all unlabeled nodes 
$i$ in $T$ takes $\scO(|V|)$ time. This allows us to compute $\sgn(\Delta(v))$ for all
forks $v$ in $\scO(|V|)$ time, and then use the first and the third steps of the on-line 
implementation. Overall, we show that predicting {\em all} labels in the test set
takes $\scO(|V|)$ time.
%, independent of the size of the test set.
%for complete the batch implementation.

Consider tree $T^i$ as rooted at $i$. Given any unlabeled node $i$, 
we perform a visit of $T^i$ starting at $i$.
During the backtracking steps of this visit we use (\ref{eq:cut}) 
%as in the on-line implementation 
to calculate $\Phi^i_j(y)$ for each node $j$ in $T^i$ and label $y \in \{-1,+1\}$.
Observe now that for any pair $i,j$ of adjacent unlabeled nodes and any label 
$y \in \{-1,+1\}$, once we have obtained $\Phi^i_i(y)$, $\Phi^i_j(+1)$ and $\Phi^i_j(-1)$, 
we can compute $\Phi^j_i(y)$ in constant time, as
\(
%\begin{equation}
\Phi^j_i(y) = \Phi^i_i(y) - \min_{y' \in \{-1,+1\}} \bigl(\Phi^i_j(y')+\Ind{y' \neq y} w_{i,j}\bigr)
%\end{equation}
\).
%where $y^*$ is equal to the label minimizing the cutsize contribution of (i) the edge set of $T^i_j$ 
%, together with (ii) edge $(i,j)$, setting $y_i=y$:
%\[
%y^* = {\arg\min}_{y''} (\Phi^{i}_j(y'') + yy'')\ .
%\]
In fact, all children of $j$ in $T^i$ are descendants of $i$, while the children of $i$ 
in $T^i$ (but $j$) are descendants of $j$ in $T^j$. 
\alg\ computes $\Phi^i_i(y)$, we can compute in constant time $\Phi^j_i(y)$ for 
all child nodes $j$ of $i$ in $T^i$, and use this value for computing $\Phi^j_j(y)$.  
Generalizing this argument, it is easy to see that in the next phase we can compute
$\Phi_k^{k}(y)$ in constant time for all nodes $k$ of $T^i$ such that for all ancestors $u$ of 
$k$ and all $y \in \{-1,+1\}$, the values of $\Phi^u_u(y)$ have previously been computed.   

%More precisely
% we can calculate $\Phi_s^{s}(y)$ for each node $s$ and any label $y$ in constant time
%once we have obtained $\Phi_{s'}^{s'}(y)$ for all nodes $s'$ on $\pi(i,p)$, where $p$ is the parent of 
%$s$ in $T^i$. 
The time for computing  $\Phi_s^{s}(y)$  for all nodes $s$ of $T^i$ and any label $y$ is 
therefore linear in the time of performing
a breadth-first (or depth-first) visit of $T^i$, i.e., linear in the number of nodes of $T^i$. Since 
each labeled node with degree $d$
is part of at most $d$ trees $T^i$ for some $i$, we have that the total number of nodes of all 
distinct (edge-disjoint) trees $T^i$ across $i \in V$ is linear in $|V|$.

Finally, we need to propagate the connection node labels of each hinge 
tree as in the third step of the online implementation. Since also 
this last step takes linear time, we conclude that the total time for 
predicting all labels is linear in $|V|$.

\newcommand{\rst}{\textsc{rst}}
\newcommand{\nwrst}{\textsc{nwrst}}
\newcommand{\mst}{\textsc{mst}}
\newcommand{\norm}[1]{\left\|{#1}\right\|}

%
%\vspace{-0.1in}
\section{Experiments}\label{s:exp}
%
%\vspace{-0.15in}
We tested our algorithm on a number of real-world weighted graphs from different domains 
(character recognition, text categorization, bioinformatics, Web spam detection) against the following 
baselines:
% We follow the experimental methodology of~\cite{CBGVZ10}.

%\vspace{-0.05in}
\textbf{Online Majority Vote} (\omv).
This is an intuitive and fast algorithm for sequentially predicting the node labels is via a 
weighted majority vote over the labels of the adjacent nodes seen so far. Namely, \omv\ predicts $y_{i_t}$ 
through the sign of $\sum_s y_{i_s} w_{i_s,i_t}$, where $s$ ranges over $s < t$ such that $(i_s,i_t) \in E$.
Both the total time and space required by \omv\ are $\Theta(|E|)$.

%\vspace{-0.05in}
\textbf{Label Propagation} (\labprop).
$\labprop$~\cite{ZGL03,BMN04,BDL06} is a batch transductive learning method computed by solving a 
system of linear equations which requires total time of the order of $|E|\times|V|$.
% For this reason we did not carry out experiments This bad scalability prevented us from carrying out comparative experiments on larger graphs ($10^6$ nodes).
This relatively high computational cost should be taken into account when comparing 
\labprop\ to faster online algorithms. Recall that $\omv$ can be viewed as a fast ``online approximation'' 
to \labprop.

%\vspace{-0.05in}
\textbf{Weighted Tree Algorithm} (\wta).
As explained in the introductory section, \wta\ can be viewed as a special case of \alg. 
When the input graph is not a line, \wta\ turns it into a line by first extracting a spanning tree 
of the graph, and then linearizing it. The implementation described in \cite{CBGVZ10} runs in constant 
amortized time per prediction whenever the spanning tree sampler runs in time $\Theta(|V|)$.

The Graph Perceptron algorithm~\cite{HPR09} is another readily available baseline.
This algorithm has been excluded from our comparison because it does not seem to be very 
competitive in terms of performance (see, e.g., \cite{CBGVZ10}), and is also computationally expensive.

In our experiments, we combined \alg\ and \wta\ with spanning trees generated in different ways (note that \omv\ and \labprop\ do not need to extract spanning trees from the input graph).

%\vspace{-0.05in}
\textbf{Random Spanning Tree} (\rst). 
Following Ch.\ 4 of~\cite{LP09}, we draw a weighted spanning tree with probability 
proportional to the product of its edge weights. We also tested our algorithms combined 
with random spanning trees generated uniformly at random ignoring the edge weights 
(i.e., the weights were only used to compute predictions on the randomly generated tree) ---we call 
these spanning trees \nwrst\ (no-weight $\rst$). On most graphs, this procedure can be run in time linear
in the number of nodes~\cite{Wil96}. Hence, the combinations \alg+\nwrst\ and \wta+\nwrst\ run 
in $\mathcal{O}(|V|)$ time on most graphs.

%\vspace{-0.05in}
\textbf{Minimum Spanning Tree} (\mst). 
%This is just the minimal weight spanning tree, where the 
%weight of a spanning tree is the sum of its edge weights.
%FV Modificato con la riga che segue
This is the spanning tree minimizing the sum of the resistors on its edges.
This tree best approximates the original graph in terms of the trace norm distance of the 
corresponding Laplacian matrices.

Following~\cite{HPR09,CBGVZ10}, we also ran \alg\ and \wta\ using committees of spanning 
trees, and then aggregating predictions via a majority vote. The resulting algorithms are denoted by 
$k$*\alg\ and $k$*\wta, where $k$ is the number of spanning trees in the aggregation.
We used either $k = 7, 11$ or $k = 3, 7$, depending on the dataset size.

For our experiments, we used five datasets: RCV1, USPS, KROGAN, COMBINED, and WEBSPAM. WEBSPAM is a big 
dataset (110,900 nodes and 1,836,136 edges) of inter-host links created for the Web Spam 
Challenge 2008~\cite{YWS07}.\footnote{
We do not compare our results to those obtained within the challenge since we are 
only exploiting the graph (weighted) topology here, disregarding content features.
%http://barcelona.research.yahoo.net/webspam/datasets/uk2007/
}
KROGAN (2,169 nodes and 6,102 edges) and COMBINED (2,871 nodes and 6,407 edges) are high-throughput protein-protein interaction networks of budding yeast taken from~\cite{PSGGK07} ---see~\cite{CBGVZ10} for a more complete description. 
% USPS is a graph of 9,298 nodes built from , also taken from~\cite{CBGVZ10}.
Finally, USPS and RCV1 are graphs obtained from the USPS handwritten characters dataset (all ten categories) 
and the first 10,000 documents in chronological order of Reuters Corpus Vol.~1 (the four most frequent categories),
% with \textsc{tf-idf} preprocessing and vector normalization.
respectively.
In both cases, we used Euclidean $10$-Nearest Neighbor to create edges, each weight $w_{i,j}$ being equal to 
$e^{-\norm{x_i-x_j}^2/\sigma^2_{i,j}}$. We set $\sigma^2_{i,j} = \tfrac{1}{2}\bigl(\sigma^2_{i} + \sigma^2_{j}\bigr)$, where $\sigma^2_{i}$ is the average squared distance between $i$ and its $10$ nearest neighbours.

Following previous experimental settings \cite{CBGVZ10}, we
associate binary classification tasks with the five datasets/graphs
% by binarizing the corresponding multiclass problems
via a standard one-vs-all reduction.
%We thus obtained: $4$ binary classification tasks for RCV1, 
%$10$ binary tasks for USPS, $17$ binary tasks for both KROGAN and COMBINED.
Each error rate is obtained by averaging over ten randomly chosen training sets 
(and ten different trees in the case of \rst\ and \nwrst). 
WEBSPAM is natively a binary classification problem, and we used the same train/test split 
provided with the dataset: 3,897 training nodes and 1,993 test nodes (the remaining nodes being unlabeled).

%Table~\ref{t:results} contains the 
%(average) test error rates achieved on USPS, RCV1, KROGAN, and COMBINED.

%\begin{table*}[h]
%\caption{
%\label{t:results}
In the below table, we show the macro-averaged classification error rates 
(percentages)  
achieved by the various algorithms on the first four datasets mentioned 
in the main text.
For each dataset we trained ten times over a random subset of 5\%, 10\% and 25\% of the total number
of nodes and tested on the remaining ones.
In boldface are the lowest error rates on each column, excluding \labprop\ which is used as a ``yardstick'' comparison.
Standard deviations averaged over the binary problems are small: most of the times less than 0.5\%.
%%
%}
\begin{center}
\begin{tiny}
\begin{tabular}{l|c|c|c|c|c|c|c|c|c|c|c|c}

\multicolumn{1}{r|}{Datasets}& \multicolumn{3}{|c|}{USPS}&\multicolumn{3}{|c|}{RCV1}&\multicolumn{3}{|c|}{KROGAN}&\multicolumn{3}{|c}{COMBINED}\\

Predictors  &5\%&10\%&25\%&5\%&10\%&25\%&5\%&10\%&25\%&5\%&10\%&25\%\\
\hline
$\alg$+$\rst$&3.62&2.82&2.02&21.72&18.70&15.68&18.11&17.68&17.10&17.77&17.24&17.34\\
$\alg$+$\nwrst$&3.88&3.03&2.18&21.97&19.21&15.95&18.11&18.14&17.32&17.22&17.21&17.53\\
$\alg$+$\mst$&\textbf{1.07}&\textbf{0.96}&\textbf{0.80}&17.71&14.87&11.73&17.46&16.92&16.30&16.79&16.64&17.15\\
\hline
$\wta$+$\rst$&5.34&4.23&3.02&25.53&22.66&19.05&21.82&21.05&20.08&21.76&21.38&20.26\\
$\wta$+$\nwrst$&5.74&4.45&3.26&25.50&22.70&19.24&21.90&21.28&20.18&21.58&21.42&20.64\\
$\wta$+$\mst$&1.81&1.60&1.21&21.07&17.94&13.92&21.41&20.63&19.61&21.74&21.20&20.32\\
\hline
7*$\alg$+$\rst$&1.68&1.28&0.97&16.33&13.52&11.07&15.54&15.58&15.46&15.12&15.24&15.84\\
7*$\alg$+$\nwrst$&1.89&1.38&1.06&16.49&13.98&11.37&15.61&15.62&15.50&15.02&15.12&15.80\\
\hline
7*$\wta$+$\rst$&2.10&1.56&1.14&17.44&14.74&12.15&16.75&16.64&15.88&16.42&16.09&15.72\\
7*$\wta$+$\nwrst$&2.33&1.73&1.24&17.69&15.18&12.53&16.71&16.60&16.00&16.24&16.13&15.79\\
\hline
11*$\alg$+$\rst$&1.52&1.17&0.89&\textbf{15.82}&\textbf{13.04}&\textbf{10.59}&\textbf{15.36}&15.40&\textbf{15.29}&14.91&15.06&15.61\\
11*$\alg$+$\nwrst$&1.70&1.27&0.98&15.95&13.42&10.93&15.40&\textbf{15.33}&15.32&\textbf{14.87}&\textbf{14.99}&15.67\\
\hline
11*$\wta$+$\rst$&1.84&1.36&1.01&16.40&13.95&11.42&16.20&16.15&15.53&15.90&15.58&\textbf{15.30}\\
11*$\wta$+$\nwrst$&2.04&1.51&1.12&16.70&14.28&11.68&16.22&16.05&15.50&15.74&15.57&15.33\\
\hline
$\omv$&24.79&12.34&2.10&31.65&22.35&11.79&43.13&38.75&29.84&44.72&40.86&33.24\\
\hline
\hline
$\labprop$&1.95&1.11&0.82&16.28&12.99&10.00&15.56&14.98&15.23&14.79&14.93&15.18\\
\end{tabular}
\end{tiny}
\end{center}
%\vspace{-3mm}
%\end{table*}
%%
%
%\vspace{-0.05in}
Next, we extract from the above table a specific comparison among \alg, \wta, and \labprop.
%Graphical comparison of error rates when
\alg\ and \wta\ use a single minimum spanning tree (the best performing tree type for both algorithms). Note that \alg\ consistently outperforms \wta.
%\labprop\ error rates are reported as a ``yardstick'' comparison
%\begin{figure}[h]
%\label{f:bars}
\begin{center}
\includegraphics[width=0.9\textwidth]{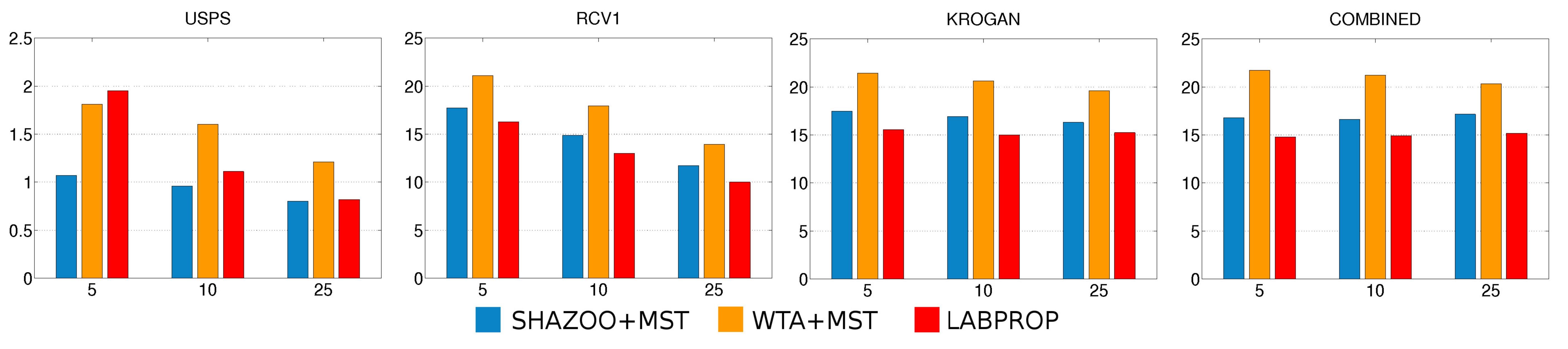}
%\vspace{-0.05in}
%\caption{Graphical comparison of error rates when \alg\ and \wta\ use a single minimum spanning tree (the best performing tree type for both algorithms). Note that \alg\ consistently outperforms \wta.
%\labprop\ error rates are reported as a ``yardstick'' comparison.}
\end{center}
%
%\vspace{-0.05in}
We then report the results on WEBSPAM. \alg\ and \wta\ use only non-weighted random spanning trees (\nwrst) 
to optimize scalability. Since this dataset is extremely unbalanced (5.4\% positive labels) 
we use the average test set F-measure instead of the error rate. 

%\begin{table*}[h]
%\caption{
%\label{t:spamresults}
%Average test set F-measure on the WEBSPAM dataset.
%}
\begin{center}
\begin{tabular}{c|c|c|c|c|c|c|c}
\alg&\wta&\omv&\labprop&3*\wta&3*\alg&7*\wta&7*\alg\\
\hline
0.954&0.947&0.706&0.931&0.967&0.964&0.968&0.968
\end{tabular}
\end{center}
%\end{table*}
%

%As in previous cases, results random spanning trees are averages over ten draws.
%
Our empirical results can be briefly summarized as follows:

\textbf{1.} Without using committees,
\alg\ outperforms \wta\ on all datasets, irrespective to the type 
of spanning tree being used. With committees, \alg\ works better than \wta\ almost always, 
although the gap between the two reduces.

\textbf{2.} The predictive performance of \alg+\mst\ is comparable to, and sometimes better than, that 
of \labprop, though the latter algorithm is slower. 
%Moreover, $\alg$, $\wta$ and $\omv$ can be used with both online and train/test protocol, while the online option is not computationally sustainable for $\labprop$.

\textbf{3.} $k$*\alg, with $k = 11$  (or $k = 7$ on WEBSPAM) 
seems to be especially effective, outperforming \labprop, with a small (e.g., 5\%) training set size.

\textbf{4.} \nwrst\ does not offer the same theoretical guarantees as \rst, but it is extremely fast to generate
(linear in $|V|$ on most graphs --- e.g., \cite{A+08}), and in our experiments is only slightly inferior to \rst.

% \subsubsection*{Acknowledgments}

\appendix 

\section*{Proof of Theorem~1}
Pick any $E' \subseteq E$ such that $\xi(M) = |E'|$. Let $F$ be the forest obtained by removing from $T$ all edges in $E'$. Draw an independent random label for each of the $|E'|+1$ components of $F$ and assign it to all nodes of that component. Then any online algorithm makes in expectation at least half mistake per component, which implies that the overall number of online mistakes is $(|E'|+1)/2 > \xi(M)/2$ in expectation. On the other hand, $\Phi^W \le M$ clearly holds by construction.

\section*{Proof of Theorem~2}
%

\iffalse
\newcommand{\outbd}[1]{\partial{#1}}
\newcommand{\oC}{\overline{C}}
\newcommand{\oc}{\overline{C}}
\newcommand{\oD}{\overline{\Delta}}
\newcommand{\scL}{\mathcal{L}}
\newcommand{\cFF}{C^{F'}}
\newcommand{\GT}{\Gamma^T}
\newcommand{\scT}{\mathcal{T}}
%\newcommand{\ok}{\overline{k}}
%
\fi

We first give additional definitions used in the analysis, then we present the main ideas, 
and finally we provide full details.

Recall that, given a labeled tree $(T,\by)$, a \textbf{cluster} is any maximal subtree whose nodes have 
the same label. Let $\scC$ be the set of all clusters of $T$. For any cluster $C\in\scC$, let 
$M_C$ be the subset of all nodes of $C$ on which \alg\ makes a mistake. Let $\oC$ be the subtree 
of $T$ obtained by adding to $C$ all nodes that are adjacent to a node of $C$. Note that 
all edges connecting a node of $\oC \setminus C$ to a node of $C$ are $\phi$-edges.
% Let $\outbd{C}$ be the set of nodes that are not in $C$ and are adjacent to a node of $C$.
Let $E^{\phi}_{\oC}$ be the set of $\phi$-edges in $\oC$ and let $\Phi_{\oC} = \bigl|
E^{\phi}_{\oC}\bigr|$. Let $\Phi^W_{\oC}$ be the total weight of the edges in 
$E^{\phi}_{\oC}$. Finally, recall the notation
$R^W_L = \sum_{(i,j) \in L} \tfrac{1}{W_{i,j}}$, where $L$ is any line graph.

%be the effective resistance between their terminal nodes.

%Finally, let 
%$D_C'$ be the resistance distance diameter of $C$, i.e. the length of the longest
%path all contained in $C$.

Recall that an \textbf{in-cluster line graph} is any line graph that is entirely contained in a single 
cluster.
The main idea used in the proof below is to bound $|M_C|$ for each $C\in\scC$ in the following 
way. 
We partition $M_C$ into $\scO(|E'_{\oC}|)$ groups, where $E'_{\oC} \subseteq E_{\oC}$. Then 
we find a set $\scL_C$ of edge-disjoint in-cluster line graphs, and create a bijection between 
lines in $\scL_C$ and groups in $M_C$. We prove that the cardinality of each group is at most 
$m_L = \min\Bigl\{ |L|, 1 +\bigl\lfloor \ln\bigl(1 + \Phi^W R^W_L\bigr)\bigr\rfloor\Bigl\}$, 
where $L\in\scL_C$ is the associated line.
%and $m_L$ is defined later.
% $\scO\Bigl( \min\Bigl\{ |V(L)|, 1 +\Bigl\lfloor \log \Bigl(1 +\Phi^W_C(\by) R^W_L\Bigr) 
%\Bigr\rfloor \Bigr\}\Bigr)$.
This shows that the subset $M_T$ of nodes in $T$ which are mispredicted by \alg\ 
satisfies
\[
    |M_T|
=
    \sum_{C\in\scC} |M_C|
\le
    \sum_{C\in\scC}\sum_{L\in\scL_C} m_L
=
    \sum_{L\in\scL_T} m_L
\]
where $\scL_T = \bigcup_{C\in\scC}\scL_C$.
Then we show that
\[
    \sum_{C \in \scC} \sum_{ (i,j) \in E'_{\oC} } w_{i,j} = \scO\bigl(\Phi^W\bigr)~.
\]
By the very definition of $\xi$, and using the bijection stated above, this implies
%\mycom{check double counting}
\[
    |\scL_T| = \sum_{C \in \scC} |\scL_C| = \scO\left(\sum_{C \in \scC} |E'_{\oC}|\right) = \scO\bigl(\xi(\Phi^W)\bigr)~,
\]
%Finally, the quantity $m_L$ will be bounded by
%$\min\Bigl\{ |L|, 1 +\bigl\lfloor \ln\bigl(1 + \Phi^W R^W_L\bigr)\bigr\rfloor\Bigl\}$,
thereby resulting in the mistake bound contained in Theorem 2.

The details of the proof require further notation.
 
According to \alg\ prediction rule, 
when $i_t$ is not a fork and $C(H(i_t)) \not\equiv \emptyset$, the algorithm predicts 
$y_{i_t}$ using the label of any $j \in C\bigl(H(i_t)\bigr)$ closest to $i_t$. 
In this case, 
we call $j$ an \textbf{r-node} (reference node) for $i_t$ and the pair $\{j, (j,v)\}$, 
where
$(j,v)$ is the edge on the path between $j$ and $i_t$, an \textbf{rn-direction} (reference 
node direction). We use the shorthand notation $i^*$ to denote an r-node for $i$. 
In the special case when all connection nodes $i$ of the hinge tree containing $i_t$ have 
$\Delta(i) = 0$ (i.e., $C(H(i_t)) \equiv \emptyset$), and $i_t$ is not a fork, we call any 
closest connection node $j_0$ to $i_t$ an r-node for $i_t$ and we say that $\{j_0, 
(j_0,v)\}$ is a rn-direction for $i_t$. Clearly, we may have more than one node of $M_C$ 
associated with the same rn-direction. Given any rn-direction $\{j, (j,v)\}$, we call 
\textbf{r-line} (reference line) the line graph whose terminal nodes are $j$ and the first 
(in chronological order) node $j_0 \in V$ for which $\{j, (j,v)\}$ is a rn-direction, where 
$(j,v)$ lies on the path between $j_0$ and $j$.\footnote{
We may also have $v \equiv j_0$.
}
We denote such an r-line by $L(j,v)$.

In the special case where $j \in C$ and $j_0 \notin C$ we say that the r-line is associated with 
the $\phi$-edge of $E^{\phi}_{\oC}$
included in the line-graph. In this case we denote such an r-line by $L(u,q)$, where $(u,q) \in E^{\phi}_{\oC}$.
Figure\ \ref{fig:rn-direction} gives a pictorial example of the above concepts.

\begin{figure}[h!]
\begin{center}
\includegraphics[scale=0.60]{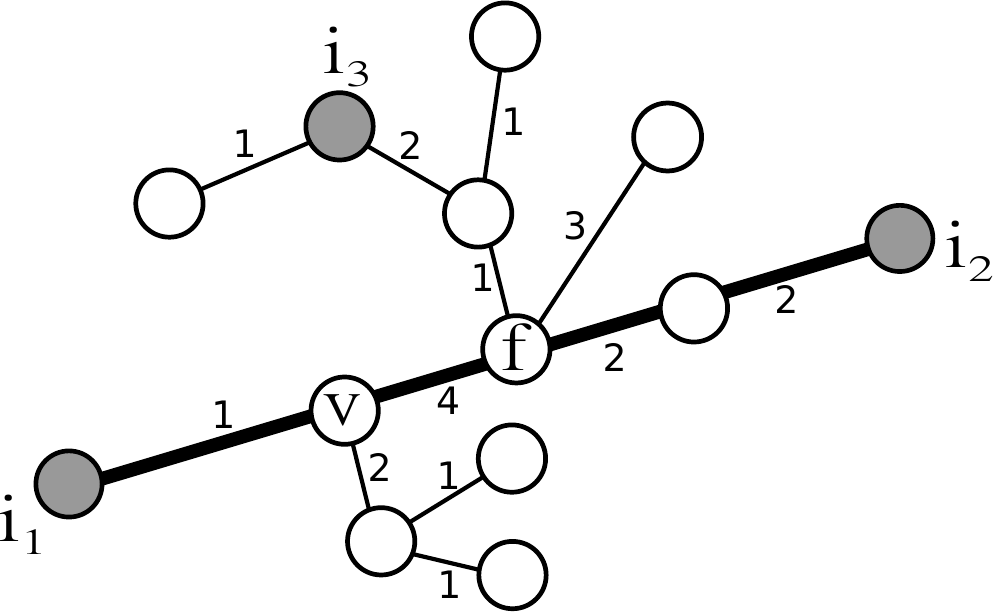}
\end{center}
%\vspace{-0.1in}
\caption{\label{fig:rn-direction} 
We illustrate an example of r-node, rn-direction and r-line. The numbers near the edge lines denote edge weights. 
In order to predict $y_{i_2}$, \alg\ uses the r-node $i_1$ and the rn-direction $\{i_1, (i_1,v)\}$. 
After observing $y_{i_2}$, the hinge line connecting $i_1$ with $i_2$ (the thick black line) is created, 
which is also an r-line, since at the beginning of step $t=2$ the algorithm used $\{i_1, (i_1,v)\}$.  
In order to predict $y_{i_3}$, we still use the r-node $i_1$ and the rn-direction $\{i_1, (i_1,v)\}$. After the revelation of $y_{i_3}$, node $f$ becomes a fork.
}
%\vspace{-0.1in}
\end{figure}

We now cover $M_C$ (the subset of all nodes of $C\in\scC$ on which \alg\ makes a mistake) by the following subsets:
\begin{itemize}
\item $\cf$ is the set of all forks in $M_C$.
%, i.e. the forks of $C$ whose labels are mispredicted by the algorithm.
\item $\ci$ is the subset of $M_C$ containing the nodes $i$ whose reference node 
$i^*$ belongs to $C$ (if $i$ is a fork, then $i^* = i$). Note that this set may have a 
nonempty intersection with the previous one.
% These nodes are subcategorized according to the associations with the rn-directions. 
\item $\co$ is the subset of $M_C$ containing the nodes $i$ such that $i^*$ does not 
belong to $C$.
% These nodes are subcategorized according to the $\phi$-edges of $E_C^{\phi}$ that lie on 
% path $\pi(i,i^*)$.
\end{itemize}
Two other structures that are relevant to the proof:
\begin{itemize}
\item $\cF$ is the subset of all forks $f \in V_C$ such that $\Delta(f) \le 0$ at some step 
$t$. Since we assume the cluster label is $+1$ (see below), 
and since a fork $i_t\in V_C$ is mistaken 
only if $\Delta(i_t) \le 0$, we have $\cf \subseteq \cF$.
\item $\cFF$ is the subset of all nodes in $M_C$ that, when revealed, create a fork that 
belongs to $\cF$. 
Since at each time step at most one new fork can be created,\footnote{
In step $t$ a new fork $j$ is created when the number of edge-disjoint
paths connecting $j$ to the labeled nodes increases. 
This event occurs only when a new hinge line
$\pi(i_t, f)$ is created. When this happens, the only node for which the number 
of edge-disjoint paths connecting it to labeled nodes gets increased is the 
terminal node $j$ of the newly created hinge line.
} 
we have $|\cFF| \le |\cF|$.
\end{itemize}
%
% Let also $\cF$ be the set of all forks $f$ of $C$ such that we have $\Delta(f) \le 0$ at 
% some time $t \le |V|$. Clearly we have $M_C^f \subseteq M_C^F$. Finally, let $\cFF$ be the 
% set of nodes of $M_C$ whose label revelation creates a fork contained in $\cF$.  Since at 
% each time step at most one new fork can be created, we have $|\cFF| \le |\cF|$. 

The proof of the theorem relies on the following sequence 
of lemmas that show how to bound 
the number of mistakes made on a given cluster $C = (V_C,E_C)$. 
A major source of technical difficulties, that makes this 
analysis different and more complex than those of \treeopt\ and \wta, is
that on a weighted tree the value of $\Delta(i)$ on forks $i$ can 
potentially change after each prediction.

Without loss of generality, from now on we assume all nodes in $C$ are labeled $+1$.
Keeping this assumption in mind is crucial to understand the arguments that follow.

For any node $i \in V_C$, let $\oD(i)$ be the value of 
$\Delta(i)$ when all nodes in $\oC\setminus C$ are revealed.
\begin{lemma}\label{lm:delta}
For any fork $f$ of $C$ and any step $t=1,\dots,n$, we have $\oD(f) \le \Delta(f)$.
\end{lemma}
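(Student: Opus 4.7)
The plan is to prove the lemma via a bottom-up induction on a tree rooted at $f$, using the fact that the cut-difference $\Delta$ obeys a clean recursive formula on trees.

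First I would root $T$ at $f$. For any set $S$ of revealed labels, define $g_S(v,y)$ to be the minimum weighted cutsize of the subtree $T_v$ rooted at $v$, over labelings consistent with $S$ and with $y_v = y$; set $d_S(v) = g_S(v,-1) - g_S(v,+1)$, so that $d_S(f)$ equals $\Delta(f)$ computed under revealed set $S$. Starting from the recursion in (\ref{eq:cut}) and using the elementary identity
\[
\min(a,\,b+w) - \min(b,\,a+w) \;=\; \phi_w(a-b),\qquad \phi_w(x) := \max\bigl(-w,\min(w,x)\bigr),
\]
(which one verifies by case analysis on the sign and magnitude of $a-b$ relative to $w$), I would derive
\[
d_S(v) \;=\; \sum_{c\in \mathrm{children}(v)} \phi_{w_{v,c}}\bigl(d_S(c)\bigr)
\]
for every unrevealed $v$. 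For a node $v$ revealed in $S$ with label $y_v$, $d_S(v) = +\infty$ if $y_v = +1$ and $d_S(v) = -\infty$ if $y_v = -1$, and I extend the clip by $\phi_w(\pm\infty) = \pm w$, so the formula above remains meaningful.

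Next I would compare the two revelation sets, $S_t$ (current) and $S_\oD := S_t \cup (\oC \setminus C)$ (the one defining $\oD$). Observe that every node of $\oC \setminus C$ carries label $-1$, because $C$ is a maximal $+1$-cluster and those nodes are adjacent to $C$ but outside it. I would then prove, by induction from the leaves of $T$ (viewed as rooted at $f$) upward, that $d_{S_\oD}(v) \le d_{S_t}(v)$ for every $v$. The base case and the case where $v$ is revealed already in $S_t$ give equality. For unrevealed $v \in \oC \setminus C$, the inequality holds because $d_{S_\oD}(v) = -\infty$. For every other unrevealed $v$, the inductive hypothesis gives $d_{S_\oD}(c) \le d_{S_t}(c)$ for each child $c$, and monotonicity of $\phi_{w_{v,c}}$ propagates the inequality through the recursion. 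Specializing to $v = f$ yields $\oD(f) = d_{S_\oD}(f) \le d_{S_t}(f) = \Delta(f)$, as desired.

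The only genuinely non-routine step is the derivation of the clip-recursion from (\ref{eq:cut}): getting the four cases of the $\min$-identity right and checking that the convention $\phi_w(\pm\infty) = \pm w$ is consistent with how revealed leaves enter the DP. Once this is in hand, the lemma follows by a transparent monotone induction, with no further algebra needed; in particular, one does not need to invoke the global bound $\Phi^W_{\oC}$ on either $\Delta(f)$ or $\oD(f)$.
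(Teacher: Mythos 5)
Your proof is correct, but it takes a genuinely different route from the paper's. The paper argues by contradiction using global cutsize bounds: it considers the cut formed by the $\phi$-edges on the hinge lines of the unrevealed subtree around $f$ and deduces that both $\Delta(f)$ and $\oD(f)$ are upper-bounded by $\Phi^W_{\oC}$; the argument is global and, as written, rather terse about how the two bounds actually produce the contradiction. You instead prove a stronger, purely local statement: rooting $T$ at $f$ and writing the cut-difference via the dynamic program of Eq.~(\ref{eq:cut}), the identity $\min(a,b+w)-\min(b,a+w)=\max(-w,\min(w,a-b))$ turns $\Delta$ into a composition of monotone clipping maps of the children's values, so revealing the (necessarily $-1$-labeled, by maximality of the $+1$-cluster $C$) nodes of $\oC\setminus C$ can only decrease the value at every node, in particular at $f$. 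I checked the clip identity and the $\pm\infty$ conventions for revealed children (a revealed child with label $y_c$ contributes exactly $y_c\,w$ to the difference, matching $\phi_w(\pm\infty)=\pm w$), and the monotone induction from the leaves up is sound; the conclusion $\oD(f)\le\Delta(f)$ follows since $f$, being a fork of $C$, is unrevealed and lies in $C$. What your approach buys is a self-contained, verifiable argument that reuses the recursion already needed for the implementation, avoids any appeal to $\Phi^W_{\oC}$, and in fact yields the more general fact that revealing any $-1$ label is monotone for $\Delta$ at every node; the paper's argument is shorter on the page but leans on a global separation picture that is harder to make fully precise.
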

\begin{proof}
For the sake of contradiction, assume $\oD(f) > \Delta(f)$.
Let $T^{f}$ be the maximal subtree of $T$ rooted at $f$ such that no internal node 
of $T^{f}$ is revealed.
Now, consider the cut given by the edges of $E^{\phi}_C$  belonging to the hinge lines of $T^{f}$. This cut separates $f$ from any revealed node labeled with $-1$. The size of this cut cannot be larger 
than $\Phi^W_{\oC}$. By definition of $\Delta(\cdot)$, this implies   
$\Delta(f) \le \Phi^W_{\oC}$. However, also $\oD(f)$ cannot be larger 
than $\Phi^W_{\oC}$. Because
\[
    \oD(i_t) \le \sum_{(i,j) \in E^{\phi}_{\oC}} W_{i,j} = \Phi^W_{\oC}
\]
must hold independent of the set of nodes in $V_C$ that are revealed before time $t$,
this entails a contradiction.
\end{proof}

%Let $k$ be the first step in which the label of node of $V_C$  is revealed.
%Let also $0$ be the last step in which the label of a node $\oC \setminus C$ is revealed.
%Observe that, if $k > 0$, we always have $\Delta(i) = \oD(i)$ for each $i \in V_C$.

Let now $\xi_{\oc}$ be the restriction of $\xi$ on the subtree $\oc$, and let
$D_C$ be the set of all distinct rn-directions which the nodes of $\ci$ can be associated with. 
The next lemmas are aimed at bounding $|\cF|$ and $|D_C|$. 
We first need to introduce the superset $D_C'$ of $D_C$. 
Then, we show that for any $C$ both $|D_C'|$ and $|\cF|$ are linear in 
$\xi_{\oc}(\Phi^W_{\oC})$. 

%Let $k$ be the first step in which the label of node of $V_C$  is revealed.
%Let also $0$ be the last step in which the label of a node $\oC \setminus C$ is 
%revealed.
%Observe that, if $k > 0$, we always have $\Delta(i) = \oD(i)$ for each $i \in V_C$.

In order to do so, we need to take into account the fact that the sign 
of $\Delta$ for the forks in the cluster can change many times during the 
prediction process. This can be done via Lemma~\ref{lm:delta}, which shows that when 
all labels in $\oC \setminus C$ are revealed then, for all fork $f\in C$,
the value $\Delta(f)$ does not increase. 
Thus, we get the largest set $D_C$ when we assume that the nodes in $\oC \setminus C$ 
are revealed before the nodes of $C$.

Given any cluster $C$, let $\sigma_{\oC}$ be the order
in which the nodes of $\oC$ are revealed. Let also $\sigma_{\oC}'$ be the permutation
in which all nodes in $C$ are revealed in the same order as $\sigma_{\oC}$, and all nodes in
$\oC \setminus C$ are revealed at the beginning, in any order.
%The definition of $D_C'$ depends on $\sigma_{\oC}'$ as described next. 
Now, given any node revelation order $\sigma_{\oC}$, $D_C'$ can be 
defined by describing the three types of steps involved in its incremental 
construction supposing $\sigma_{\oC}'$ was the actual node revelation order.
\begin{enumerate}
\item After the first $|\oC \setminus C| = \Phi_{\oC}$ steps, $D_C'$ contains all node-edge pairs 
$\{i, (i,j)\}$ such that $i$ is a fork and $(i,j)$ is an edge laying on a hinge 
line of $\oC$. Recall that no node in $C$ is revealed yet.
\item For each step $t > 0$ when a new fork $f$ is created such that 
$\Delta(f) \le 0$ just after the revelation of $y_{i_t}$, 
we add to $D_C'$ the three node-edge pairs $\{f, (f,j)\}$, where the $(f,j)$ are the 
edges contained in the three hinge lines terminating at $f$. 
\item Let $s$ be any step where: 
(i) A new hinge line $\pi(i_s, i_s^*)$ is created,
(ii) node $i_s^*$ is a fork, and 
(iii) $\Delta(i_s^*) \le 0$ at time $s-1$. 
On each such step we add $\{i_s^*, (i_s^*,j)\}$ to $D_C'$, 
for $j$ in $\pi(i_s, i_s^*)$.
\end{enumerate}
It is easy to verify that, given any ordering $\sigma_{\oC}$ for the node 
revelation in $\oC$, we have $D_C \subseteq D_C'$. In fact, given an rn-direction
$\{i, (i,j)\} \in D_C$, if $(i,j)$ lies along one of the hinge lines 
that are present at time $0$ according to $\sigma'_{\oC}$, 
then $\{i, (i,j)\}$ must be included in $D_C'$ during
one of the steps of type 2 above, 
otherwise $\{i, (i,j)\}$ will be included in $D_C'$ during one of
the steps of type 2 or type 3.

As announced, the following lemmas show that $|D_C'|$ and $|\cF|$ 
are both of the order of $\xi_{\oc}(\Phi^W_{\oC})$.
\begin{lemma}\label{dc_first}
(i) The total number of forks at time $t = \Phi_{\oc}$ is $\scO\bigl(\xi(\Phi^W_{\oC})\bigr)$. 
(ii) The total number of elements added to $D_C'$ in the first step of its 
construction is $\scO\bigl(\xi(\Phi^W_{\oC})\bigr)$.
\end{lemma}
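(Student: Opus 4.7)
The plan is to analyze the structure of $\oC$ at the moment when exactly the nodes of $\oC\setminus C$ have been revealed (which, under $\sigma'_{\oC}$, is step $t=\Phi_{\oC}$), and to exploit the fact that a tree cannot have many high-degree nodes unless it has many leaves. First I would let $C'$ denote the minimal subtree of $\oC$ that spans all nodes of $\oC\setminus C$, i.e.\ the union of all paths in $\oC$ between pairs of revealed nodes. Since at this time every revealed node lies outside $C$ and every node inside $C$ is unrevealed, the definition of a fork (unrevealed node connected by three edge-disjoint paths to three distinct revealed nodes) is equivalent to being a node of $C'\cap V_C$ of degree at least $3$ in $C'$.

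Next I would control the number of leaves of $C'$. Because each node in $\oC\setminus C$ is adjacent to $C$ through a unique $\phi$-edge (the only way it entered $\oC$), every such node has degree exactly $1$ in $C'$. Rooting $C'$ at any fixed node of $\oC\setminus C$, the leaves of $C'$ are therefore the remaining $|\oC\setminus C|-1=\Phi_{\oC}-1$ nodes of $\oC\setminus C$. A standard degree-sum argument for trees then gives both
\[
    \#\{\text{degree-}{\ge}3\text{ nodes of }C'\}\le \#\text{leaves}(C')-2,
\]
and
\[
    \sum_{v:\deg_{C'}(v)\ge 3}\deg_{C'}(v)\le 3\,\#\text{leaves}(C')-6,
\]
so both the number of forks and the total number of (fork, incident hinge-line edge) pairs are $\scO(\Phi_{\oC})$. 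This is where parts (i) and (ii) of the lemma differ: (i) uses the first inequality and (ii) uses the second, since an element added at step~1 to $D_C'$ is precisely a pair consisting of a fork and one of its incident edges along a hinge line, and each such edge corresponds to a distinct neighbor of the fork in $C'$.

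To finish, I would convert the bound in terms of $\Phi_{\oC}$ into a bound in terms of $\xi(\Phi^W_{\oC})$. This is immediate from the definition of $\xi$: the set $E^{\phi}_{\oC}$ is itself a family of edges whose total weight equals $\Phi^W_{\oC}$, so $\xi(\Phi^W_{\oC})\ge |E^{\phi}_{\oC}|=\Phi_{\oC}$, and both bounds collapse to $\scO(\xi(\Phi^W_{\oC}))$ as required.

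There is no real obstacle here; the only subtlety is making sure that at step $t=\Phi_{\oC}$ the reordering $\sigma'_{\oC}$ really makes all of $\oC\setminus C$ revealed and none of $C$ revealed, so that the ``fork'' concept coincides exactly with ``degree $\ge 3$ in $C'$''—this is guaranteed by construction of $\sigma'_{\oC}$, and it is the justification for passing from $\sigma_{\oC}$ to $\sigma'_{\oC}$ in the first place via Lemma~\ref{lm:delta}.
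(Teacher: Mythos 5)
Your proposal is correct and follows essentially the same route as the paper's proof: both define $C'$ as the subtree of $\oC$ spanned by the paths between nodes of $\oC\setminus C$, identify the forks at time $t=\Phi_{\oC}$ with the degree-$\ge 3$ nodes of $C'$, bound these (and their incident hinge-line edges) by a degree-sum argument in terms of the $\Phi_{\oC}-1$ leaves, and finally observe that $\Phi_{\oC}=|E^{\phi}_{\oC}|\le\xi(\Phi^W_{\oC})$ by the definition of $\xi$. Your version is in fact slightly more explicit than the paper's on the two degree-sum inequalities and on the last conversion step.
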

\begin{proof}
Assume nodes are revealed according to $\sigma_{\oC}'$.
Let $C'$ be the subtree of $\oC$ made up of all nodes
in $\oC$ that are included in any path connecting two nodes of $\oC\setminus C$. 
By their very definition, the forks at time $t=\Phi_{\oc}$ are the nodes of 
$V_{C'}$ having degree larger than two in subtree $C'$. Consider $C'$ 
as rooted at an arbitrary node of $\oC\setminus C$. The number of the leaves 
of $C'$ is equal to $|\oC\setminus C|-1$. This is in turn
$\scO\bigl(\xi_{\oC}(\Phi^W_{\oC}\bigr)$ because 
\[
\sum_{(i,j) \in E^{\phi}_{\oC}} w_{i,j} = \scO\bigl(\xi_{\oC}(\Phi^W_{\oC})\bigr)~.
\]
Now, in any tree, the sum of the degrees of nodes having degree larger 
than two cannot is at most linear in the number of leaves. Hence, at time $t = \Phi_{\oc}$ 
both the number of forks in $C$ and the cardinality of $D_C'$ are 
$\scO\bigl(\xi_{\oC}(\Phi^W_{\oC})\bigr)$. 
\end{proof}

Let now $\GT_t$ be the minimal cutsize of $T$ consistent with the labels 
seen before step $t+1$, and notice that $\GT_t$ is nondecreasing with $t$. 
%because the minimal cutsize can only grows after the revelation of each label.
%
\begin{lemma}\label{incr_cutsize}
Let $t$ be a step when a new hinge line $\pi(i_t, q)$ is created 
such that $i_t, q \in V_C$. If just after step $t$ we have $\Delta(q) \le 0$, 
then  $\GT_t-\GT_{t-1} \ge w_{u,v}$, where $(u,v)$ is the lightest 
edge on $\pi(i_t, q)$.
\end{lemma}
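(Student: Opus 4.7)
The plan is to compare $\GT_t$ and $\GT_{t-1}$ via an intermediate constrained minimum cut, using the hypothesis $\Delta(q)\le 0$ to eliminate the asymmetry between the constrained and unconstrained minima at time $t$, and exploiting the hinge-line structure of the fresh path $\pi(i_t,q)=u_0,u_1,\ldots,u_k$ (with $u_0=i_t$, $u_k=q$) to quantify the cost forced by revealing $y_{i_t}=+1$.

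First, from $\Delta(q)=\cut(q,-1)-\cut(q,+1)\le 0$ evaluated with labels through step $t$, the unconstrained min cut equals the $y_q=-1$-constrained one, giving $\GT_t=\cut(q,-1)$ at time $t$. A key structural step is to show that the internal path nodes $u_1,\ldots,u_{k-1}$, being by definition not hinge nodes, cannot have any revealed nodes in their off-path subtrees: such a revealed node, together with the two hinge-node endpoints of $\pi(i_t,q)$, would yield three edge-disjoint paths from $u_i$ to three distinct revealed nodes, forcing $u_i$ to be a fork---a contradiction. This implies that for any choice of endpoint labels $y_{u_0}, y_{u_k}$, the path contribution to any min cut is exactly $0$ when the endpoints agree and exactly $w_{u,v}$ (cut at the lightest edge) when they disagree, because the internal nodes and their off-path subtrees can be labeled uniformly at zero cost.

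Writing $\beta_0(\cdot),\beta_k(\cdot)$ for the minimum cut contributions of the end-subtrees $S_0, S_k$ as functions of the root label, this decomposition yields $\GT_t=\beta_0(+1)+\beta_k(-1)+w_{u,v}$. At time $t{-}1$, where $y_{i_t}$ is unconstrained, a labeling with $y_{i_t}=-1$ and the whole path set to $-1$ is feasible and consistent with all earlier revealed labels, yielding $\GT_{t-1}\le\beta_0(-1)+\beta_k(-1)$. Subtracting, $\GT_t-\GT_{t-1}\ge\beta_0(+1)-\beta_0(-1)+w_{u,v}$.

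The main obstacle is thus to show $\beta_0(+1)\ge\beta_0(-1)$, which is where both the newness of $\pi(i_t,q)$ as a hinge line and the assumption $i_t\in V_C$ really come in: because $i_t$ was not a hinge node at time $t{-}1$ and $V_C$ is the $+1$-labeled cluster, the end-subtree $S_0$ cannot have accumulated a prior structural preference for the $-1$ label at $i_t$ large enough to flip the inequality. Making this rigorous---essentially, showing that at time $t{-}1$ the minimum labeling of $S_0$ under the $y_q=-1$-constrained problem genuinely prefers $y_{i_t}=-1$ only up to the ``savings'' $w_{u,v}$ achievable on the path---will require an additional hinge-node/fork count analogous to the one used for the internal path nodes, and I expect this to be the most delicate technical step of the argument.
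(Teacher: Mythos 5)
Your reduction follows essentially the same route as the paper's (very terse) proof: use $\Delta(q)\le 0$ to pin the minimizing labeling at $y_q=-1$ on both sides of the comparison, use the fact that no internal node of the new hinge line is a hinge node to localize the forced cost on the path, and conclude that revealing $y_{i_t}=+1$ forces an extra $w_{u,v}$. The paper does this in two sentences: it first observes that, since $\pi(i_t,q)$ lies entirely inside the $+1$ cluster, $\Delta(q)\le 0$ must already hold just before $y_{i_t}$ is revealed, and then asserts that $\GT_t-\GT_{t-1}$ is at least the minimum cutsize created on the path by assigning $-1$ to $q$. Your $\beta_0/\beta_k$ bookkeeping is a correct and more explicit version of that second sentence, and your derivation of $\GT_t-\GT_{t-1}\ge \beta_0(+1)-\beta_0(-1)+w_{u,v}$ is right.

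However, you stop exactly where the content of the lemma lies: the inequality $\beta_0(+1)\ge\beta_0(-1)$ is left unproven, so the proof is incomplete. This is a genuine gap, not a routine verification, and the paper's own proof silently assumes the same thing. Worse, the inequality can fail under the stated hypotheses. Since $i_t$ is not a fork at time $t-1$, all revealed nodes of the end-subtree $S_0$ lie behind a single edge $(i_t,z)$; if that branch contains a revealed $+1$ node and $w_{i_t,z}=\epsilon$ is tiny, then $\beta_0(+1)=0$ while $\beta_0(-1)=\epsilon$. Concretely, take the path $q-u-i_t-z-r$ with two extra heavy leaves $a,b$ attached to $q$ and labeled $-1$, all path nodes labeled $+1$, edge weights $w_{q,u}=w_{u,i_t}=w$, $w_{i_t,z}=\epsilon$, and reveal $a,b,r$ before $i_t$: one checks that $q$ is a fork at time $t-1$, that $\pi(i_t,q)$ is a newly created hinge line with $\Delta(q)\le 0$ after step $t$, and that $\GT_{t-1}=\epsilon$ while $\GT_t=w$, so $\GT_t-\GT_{t-1}=w_{u,v}-\epsilon<w_{u,v}$. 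What rescues the statement in its main application (Lemma~\ref{dc_second}, where the fork at $q$ is \emph{created} by the revelation of $y_{i_t}$) is that there $S_0$ can contain no revealed node at all --- otherwise $q$ would already have been a fork --- so $\beta_0(+1)=\beta_0(-1)=0$ and your inequality holds with equality. So closing your gap requires either strengthening the hypothesis (no revealed nodes behind $i_t$, as in the new-fork case) or weakening the conclusion by $w_{i_t,z}$; as written, neither your argument nor the paper's establishes the lemma in the generality in which it is stated.
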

\begin{proof}
Since $\Delta(q) \le 0$ and $\pi(i_t, q)$ is completely included in
$C$,  we must have $\Delta(q) \le 0$ just before the revelation of $y_{i_t}$. 
This implies that the difference $\GT_t-\GT_{t-1}$ cannot be smaller
than the minimum cutsize that would be created on $\pi(i_t, q)$ by 
assigning label $-1$ to node $q$.
\end{proof}
\begin{lemma}\label{dc_second}
Assume nodes are revealed according to $\sigma_{\oC}'$.
Then the cardinality of $\cF$ and the total number of elements 
added to $D_C'$ during the steps of type 2 above are both linear in 
$\xi_{\oc}(\Phi^W_{\oC})$.
\end{lemma}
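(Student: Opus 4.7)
The plan is to bound both $|\cF|$ and the type-2 addition count by reducing them to a single quantity---the number of type-2 events, i.e., steps $t$ at which a newly created fork $f\in V_C$ has $\Delta(f)\le 0$ just after the revelation of $y_{i_t}$---and then to control that quantity using Lemma~\ref{incr_cutsize} together with the definition of $\xi_{\oc}$.

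First I would observe that each type-2 step contributes exactly three node-edge pairs to $D_C'$, so bounding the type-2 additions is equivalent (up to the factor $3$) to bounding the number of type-2 events. To reduce $|\cF|$ to the same quantity I would invoke Lemma~\ref{lm:delta}: for any $f\in\cF$ there is a step at which $\Delta(f)\le 0$, so $\oD(f)\le 0$, a condition that is independent of the actual revelation order. Working under $\sigma_{\oC}'$, I would split the forks $f\in V_C$ with $\oD(f)\le 0$ into those already present at time $t=\Phi_{\oc}$ (when $\sigma_{\oC}'$ has just finished revealing $\oC\setminus C$) and those created strictly later. The first group has cardinality $\scO(\xi_{\oc}(\Phi^W_{\oC}))$ by Lemma~\ref{dc_first}(i). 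For the second group I would use the monotonicity property highlighted in the paper's discussion: after step $\Phi_{\oc}$, only $+1$-labeled nodes of $C$ are revealed, and each such revelation can only push $\Delta(f)$ upward, so a fork created with $\Delta(f)>0$ keeps $\Delta(f)>0$ forever. Hence every fork in the second group has $\Delta(f)\le 0$ at creation, which is precisely a type-2 event.

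To count type-2 events I would apply Lemma~\ref{incr_cutsize}: each such event at step $t$ creates a fresh hinge line $\pi(i_t,f)\subseteq C$ with $\GT_t-\GT_{t-1}\ge w_{u,v}$ for the lightest edge $(u,v)$ on that line. Since hinge lines created at distinct steps are pairwise edge-disjoint, this gives an injection from type-2 events to edges of $E_C$. The associated edge weights telescope to at most $\GT_\infty-\GT_{\Phi_{\oc}}\le \Phi^W_{\oC}$, because after step $\Phi_{\oc}$ the cut contribution from $\oC\setminus C$ has already been absorbed into $\GT_{\Phi_{\oc}}$. By the definition of $\xi_{\oc}$, the number of such associated edges, and hence the number of type-2 events, is $\scO(\xi_{\oc}(\Phi^W_{\oC}))$; combined with Lemma~\ref{dc_first}(i) this yields the desired bound on $|\cF|$ as well.

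The main obstacle is the monotonicity claim for $\Delta(f)$ under $\sigma_{\oC}'$: one has to show rigorously that revealing a $+1$ label in $C$ never decreases $\Delta(f)=\cut(f,-1)-\cut(f,+1)$. Since both $\cut(f,-1)$ and $\cut(f,+1)$ can individually grow under such a revelation, proving the difference does not decrease will require an exchange-type argument on the optimal labelings realizing the two cut values, exploiting the fact that the new constraint is specifically $+1$. A secondary delicate point is the justification of the bound $\Phi^W_{\oC}$ on the telescoping sum: one must verify that the $\GT$-increases attributable to type-2 events---whose associated edges all lie inside $E_C$---are controlled by the boundary $\phi$-weight of $\oC$ rather than by the global cutsize $\Phi^W$.
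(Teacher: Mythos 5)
Your proposal follows essentially the same route as the paper's proof: bound $|\cF|$ via Lemma~\ref{lm:delta} and the revelation order $\sigma_{\oC}'$, split the forks into those present after the first $\Phi_{\oC}$ steps (handled by Lemma~\ref{dc_first}) and those created later with $\Delta(f)\le 0$ at creation, then count the latter by injecting them into edges of $E_C$ via the $\GT$-increments of Lemma~\ref{incr_cutsize} and invoking the definition of $\xi$. The two delicate points you flag (monotonicity of $\Delta(f)$ under $+1$ revelations in $C$, and the telescoping bound by $\Phi^W_{\oC}$) are asserted with the same level of detail in the paper's own argument, so your write-up is a faithful match.
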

\begin{proof}
Let  $\cF_0$ be the set of forks in $V_C$ such that 
$\oD(f) \le 0$ at some time $t \le |V|$. Recall that, by
definition, for each fork $f \in \cF$ there exists a step $t_f$ 
such that $\Delta(f) \le 0$. Hence, Lemma \ref{lm:delta} implies that, 
at the same step $t_f$, for each fork $f \in \cF$ we have $\oD(f) \le 0$. 
Since $\cF$ is included in $\cF_0$, we can bound $|\cF|$ by $|\cF_0|$, 
i.e., by the number of forks $i \in V_C$ such that $\Delta(i) \le 0$,
under the assumption that $\sigma_{\oC}'$ is the actual revelation order 
for the nodes in $\oC$.

Now, $|\cF_0|$ is bounded by the number of forks created in the first $|\oC\setminus C| = \Phi_{\oC}$ steps, which is equal to $\scO\bigl(\xi(\Phi^W_{\oC})\bigr)$ plus the number of forks $f$ created at some later step and such that $\Delta(f) \le 0$ right after their creation. 
Since nodes in $\oC$ are revealed according to $\sigma_{\oC}'$, the condition
$\Delta(f) >0$ just after the creation of a fork $f$ implies that we will never 
have $\Delta(f) \le 0$ in later stages. Hence this  fork $f$ belongs neither to 
$\cF_0$ nor to $\cF$.

In order to conclude the proof, it suffices to bound from above 
the number of elements added to $D_C'$ in the steps of type 2 above. 
From Lemma \ref{incr_cutsize}, we can see that for each fork $f$ created 
at time $t$ such that $\Delta(f) \le 0$ just after the revelation of node $i_t$, 
we must have 
$|\GT_t-\GT_{t-1}| \ge w_{u,v}$, where $(u,v)$ is the lightest edge 
in $\pi(i_t, f)$. Hence, we can injectively
associate each element of $\cF$ with an edge of $E_C$, in such a way that 
the sum of the weights of these edges is bounded by $\Phi^W_{\oC}$. 
By definition of $\xi$, we can therefore conclude that the total number of 
elements added to $D_C'$ in the steps of type 2 is $\scO\bigl(\xi(\Phi^W_{\oC})\bigr)$.
\end{proof}

With the following lemma we bound the number of nodes of $\ci \setminus \cFF$ associated with every 
rn-direction and show that one can perform a transformation of the r-lines so as to make them edge-disjoint. This transformation is crucial for finding the set $\scL_T$ appearing in the theorem statement.
Observe that, by definition of r-line,
we cannot have two r-lines such that each of them includes only one terminal node of the other.
Thus, let now $F_C$ be the forest where each node is associated with an r-line and where the parent-child
relationship expresses that 
(i) the parent r-line contains a terminal node of the child r-line, together with 
(ii) the parent r-line and the child r-line are not edge-disjoint. $F_C$ is, in fact, a forest of r-lines.
We now use $m_{L(j,v)}$
%\[
%   
% \min\left\{ |L(j,v)|,\, 1 +\left\lfloor \log_2 \left(1 +\frac{R^W_{L(j,v)}+(\Phi^W)^{-1}}{(\Phi^W)^{-1}}\right) \right
% \rfloor \right\}
% \]
% which we use 
for bounding the number of mistakes associated with a given rn-direction $\{i, (j,v)\}$ or with a given $\phi$-edge
$(j,v)$.
Given any connected component $T'$ of $F_C$, let finally $m_{T'}$ be the total  number of nodes of $\ci \setminus \cFF$ associated with the rn-directions $\{i,(i,j)\}$ of all r-lines $L(i,j)$ of $T'$.

\begin{lemma}
\label{lm:m_L}
Let $C$ be any cluster. Then:
\begin{itemize}
\item[(i)] The number of nodes in $\ci \setminus \cFF$ associated with a given 
rn-direction $\{j,(j,v)\}$ is of the order of 
$m_{L(i,j)}$.
\item[(ii)] The number of nodes in $\co \setminus \cFF$ associated with a given 
$\phi$-edge $(u,q)$ is of the order of 
$m_{L(u,q)}$.

\item[(iii)] Let  $L(j_r,v_r)$ be the r-line associated with the root of any connected component $T'$ of $F_C$.
$m_{T'}$  must be at most of the same order of
\[
    \sum_{L(j,v) \in \scL(L(j_r,v_r))} m_{L(j,v)} + |V_{T'}|
\]
where $ \scL(L(j_r,v_r))$ is a set of $|V_{T'}|$ edge-disjoint line graphs completely contained in $L(j_r,v_r)$.  
\end{itemize}
\end{lemma}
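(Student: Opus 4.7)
The plan is to establish the three parts separately, with parts (i) and (ii) handled by a common geometric/halving argument and part (iii) handled by a structural decomposition of the root r-line of $T'$.

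For part (i), fix an rn-direction $\{j,(j,v)\}$ and list the nodes $i_{t_1}, i_{t_2}, \ldots$ of $\ci\setminus\cFF$ chronologically. At each such step, either $i_{t_k}\in L(j,v)$ or the revelation of $y_{i_{t_k}}$ creates a new fork $f_k$ on $L(j,v)$; in the latter case, since $i_{t_k}\notin\cFF$, we must have $\Delta(f_k)>0$ at every later step. In either case, the portion of $L(j,v)$ that can host a subsequent node using $\{j,(j,v)\}$ as rn-direction shrinks to the segment from $j$ to the new hinge node, so $d(i_{t_{k+1}},j)\le d(i_{t_k},j)$. Now the adversary's total budget $\Phi^W$ gives a concentration effect: once a ``+1'' hinge node (either revealed or a fork with $\Delta>0$) appears on $L(j,v)$ within resistance distance $1/\Phi^W$ of $j$, every edge on the separating sub-path has weight at least $\Phi^W$, so any cut forcing $\Delta(j)\le 0$ would already exhaust the whole budget, contradiction. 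Combining the geometric shrinkage with this stopping condition yields the $\log(1+\Phi^W R^W_{L(j,v)})$ bound, and the trivial $|L(j,v)|$ bound is immediate since each mistake consumes a distinct node on the line; the minimum of the two is $m_{L(j,v)}$. Part (ii) is essentially the same argument with the $\phi$-edge $(u,q)$ playing the role of the anchor and $L(u,q)$ the role of $L(j,v)$.

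Part (iii) is purely combinatorial. First I would reduce to the case where every non-root r-line of $T'$ is entirely contained in its parent r-line: if a child $L(j',v')$ sticks out of its parent $L(j,v)$, split it at the point $k$ where it leaves the parent; chronology of r-line creation (parent before child) guarantees that the outside piece hosts at most one associated mispredicted node (namely $j_0$), which is absorbed into an $\scO(1)$ overhead per r-line. After this reduction, process $T'$ recursively along the root $L(j_r,v_r)$: for each child r-line we pick the node $u$ on $L(j_r,v_r)$ where it first becomes a hinge node, and use this to slice $L(j_r,v_r)$ into edge-disjoint sub-paths between consecutive such $u$-points. Because the $|V_{T'}|-1$ slicing points produce exactly $|V_{T'}|$ segments, and because parts (i)--(ii) tell us each original r-line contributes at most $m_L$ mistakes along its own slice, we obtain the claimed recursive bound $m_{T'}\lesssim \sum_{L\in \scL(L(j_r,v_r))} m_L + |V_{T'}|$, where the $+|V_{T'}|$ absorbs the per-line boundary contributions.

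The main obstacle I expect is making the halving step in parts (i)--(ii) airtight, in particular the case analysis when $i_{t_k}$ creates a fork $f_k$ rather than lying on $L(j,v)$: I need to argue that $\Delta(f_k)>0$ holds not just at the creation step but persists, which is exactly the condition $i_{t_k}\notin\cFF$. The other delicate point is carrying out the ``trim partially-overlapping child r-lines'' reduction in (iii) without double-counting nodes associated with the child's rn-direction, which requires being careful that the trimmed-off piece contains only the chronologically-first node $j_0$ of the child.
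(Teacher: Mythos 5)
Your proposal follows essentially the same route as the paper's proof: the dichotomic/halving argument on $L(j,v)$ with the heavy-edge stopping condition once the nearest hinge node lies within resistance distance $(\Phi^W)^{-1}$ of $j$ for parts (i)--(ii), and for part (iii) the trim-to-containment reduction (absorbing the chronologically-first node $j_0$ of each partially overlapping child) followed by slicing the root r-line at the $|V_{T'}|-1$ points where children become hinge nodes into $|V_{T'}|$ edge-disjoint segments. One small correction: the inequality you display, $d(i_{t_{k+1}},j)\le d(i_{t_k},j)$, is only monotonic and would not yield the logarithmic term; you need the factor-two version $d(i_{t_{k+1}},j)\le \tfrac{1}{2}\,d(i_{t_k},j)$, which holds because $j$ must still be the \emph{closest} connection node with $\Delta\neq 0$ to $i_{t_{k+1}}$ after $i_{t_k}$ (or the fork it creates) has become a hinge node, forcing $i_{t_{k+1}}$ into the half of the segment nearer to $j$.
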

\begin{proof}
We will prove only (i) and (iii), (ii) being similar to (i). 
Let $i_t$ be a node in $\ci \setminus \cFF$ associated with a given 
rn-direction $\{j,(j,v)\}$.
There are two possibilities: 
(a) $i_t$ is in $L(j,v)$ or 
(b) the revelation of $y_{i_t}$ creates a fork $f$ in $L(j,v)$ 
such that $\Delta(f) > 0$ for all steps $s \ge t$. 
Let now $i_{t'}$ be the next node (in chronological order) of 
$\ci \setminus \cFF$ associated with $\{j,(j,v)\}$.
The length of $\pi(i_{t'}, i_t)$ cannot be smaller than the length of 
$\pi(i_{t'}, j)$ (under condition (a)) or smaller than the length of $\pi(f, j)$ 
(under condition (b)).

This clearly entails a dichotomic behaviour in the sequence
of mistaken nodes in $\ci \setminus \cFF$ associated with $\{j,(j,v)\}$.
Let now $p$ be the node in $L(j,v)$ which is
farthest from $j$ such that the length of $\pi(p,j)$ is not larger than $\Phi^W$. 
Once a node in $\pi(p,j)$ is revealed or becomes a fork $f$ satisfying 
$\Delta(f) > 0$ for all steps $s \ge t$, we have $\Delta(j) > 0$ 
for all subsequent steps (otherwise, this would contradict 
the fact that the total cutsize of $T$ is $\Phi^W$).
Combined with the above sequential dichotomic behavior, this shows that
the number of nodes of $\ci \setminus \cFF$ associated with a given
rn-direction $\{j,(j,v)\}$ can be at most of the order of
\[
\min\left\{ |L(j,v)|,\, 1 +\left\lfloor \log_2 \left(\frac{R^W_{L(j,v)}+(\Phi^W)^{-1}}{(\Phi^W)^{-1}}\right) \right\rfloor \right\} 
= 
m_{L(j,v)}~.
\]
Part (iii) of the statement can be now proved in the following way. Suppose now that an r-line $L(j, v)$, having  $j$ and $j_0$ as terminal nodes, 
includes the terminal node $j'$ of another r-line $L(j',v')$, having $j'$ and $j_0'$ as terminal nodes. Assume also that the two r-lines are not edge-disjoint. 
If $L(j',v')$ is partially included in $L(j, v)$, i.e., if $j_0'$ does not belong to $L(j, v)$,
then $L(j',v')$ can be broken into two sub-lines: the first one has 
$j'$ and $k$ as terminal nodes, being $k$ the node in $L(j, v)$ which is 
farthest from $j'$; the second one has $k$ and $j_0'$ as terminal nodes. 
It is easy to see that $L(j, v)$ must be created before 
$L(j',v')$ and $j_0$ is the only node of the second sub-line
that can be associated with 
the rn-direction $\{j', (j',v')\}$.
This observation reduces the problem to considering that in $T'$
each r-line that is not a root is completely included in its parent. 

Given an r-line $L(u,q)$ having $u$ and $z$ as terminals, we denote by $m_{\pi(u,z)}$
the quantity $m_{L(u,q)}$.
 
Consider now the simplest case in which $T'$ is formed by only
two r-lines: the parent r-line $L(j_p,v_p)$, which completely contains the child r-line $L(j_c,v_c)$.
Let $s$ be the step in which the first node $u$ of $L(j_p,v_p)$ becomes a hinge node.
 After step $s$, $L(j_p,v_p)$ can be vieved as broken in two edge-disjoint sublines
 having $\{j_p, u\}$ and $\{j_0, u\}$ as terminal node sets, where $j_0$ is one of the terminal of $L(j_p,v_p)$. Thus,
\[
	m_{T'}
\le
	\max_{u \in V_{L(j_p,v_p)}} m_{\pi(j_p,u)} + m_{\pi(u,j_0)} + 1~.
\]
 Generalizing this argument for every component $T'$ of $F_C$, and using the above observation
 about the partially included r-lines, we can state that, for any component $T'$ of $F_C$, $m_{T'}$ is of 
the order of 
 \[
 \max_{u_1, \ldots, u_{N} \in V_{L(j_p,v_p)}} \Bigl(m_{\pi(j_p,u_1)} + 
 m_{\pi(u_{N},j_0)} +  \sum_{k = 1}^{N-1} m_{\pi(u_{k},u_{k+1})} + 2|V_{T'}|\Bigr)
\]
where $N = |V_{T'}|-1$.
This entails that we can define $\scL(L(j_r,v_r))$  as the union of $\{\pi(j_p,u_1), \pi(u_{N},j_0)\}$ and
$\bigcup_{k = 1}^{N-1} \pi(u_{k},u_{k+1})$, which concludes the proof.

\end{proof}
\begin{lemma}\label{dc_third}
The total number of elements added to $D_C'$ during steps of type 3 above is
of the order of $\xi_{\oc}(\Phi^W_{\oC})$.
\end{lemma}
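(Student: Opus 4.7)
The plan is to split the type-3 additions into two disjoint subcases depending on the value of $\Delta(i_s^*)$ just after the revelation at step $s$, and to bound each subcase by $\scO\bigl(\xi_{\oc}(\Phi^W_{\oC})\bigr)$ separately; summing the two gives the claim.

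In Case~(a), where $\Delta(i_s^*)\le 0$ still holds at time $s$, I would apply Lemma~\ref{incr_cutsize} to the newly created hinge line $\pi(i_s,i_s^*)$, which is entirely contained in $C$ because both $i_s$ and $i_s^*$ are in $V_C$. The lemma yields $\GT_s-\GT_{s-1}\ge w_{u,v}$, where $(u,v)$ is the lightest edge on $\pi(i_s,i_s^*)$. The key structural fact I would then use is that the newly created hinge lines across distinct steps are edge-disjoint, so picking one such $(u,v)$ per step defines an injection from Case~(a) steps into $E_C$. Telescoping over $s$ and using $\GT_n\le\Phi^W_{\oC}$ (the $\Delta$-values are controlled by the $\phi$-edges of $\oC$, cf.\ Lemma~\ref{lm:delta}) bounds the total weight of the chosen edges by $\scO(\Phi^W_{\oC})$, and the definition of $\xi_{\oc}$ then caps their number by $\scO\bigl(\xi_{\oc}(\Phi^W_{\oC})\bigr)$.

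In Case~(b), where $\Delta(i_s^*)>0$ at time $s$, I would argue that under the revelation order $\sigma'_{\oC}$ (all of $\oC\setminus C$ revealed first, and afterwards only labels of $C$, all equal to $+1$), once $\Delta(f)$ becomes positive at a fork $f\in V_C$ it stays positive for all subsequent steps. Hence condition~(iii) ``$\Delta(i_s^*)\le 0$ at time $s-1$'' can be triggered at most once for any given fork, and the fork in question must belong to $\cF$. Invoking Lemma~\ref{dc_second} then bounds the number of Case~(b) steps by $|\cF|=\scO\bigl(\xi_{\oc}(\Phi^W_{\oC})\bigr)$.

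The main obstacle is the monotonicity claim underlying Case~(b): one has to justify that, after all of $\oC\setminus C$ has been revealed, adding further $+1$-labels inside $C$ can never drive $\Delta(f)$ back to $\le 0$ at a fork $f\in V_C$. The intuition is straightforward—revealing a $+1$ label can only (weakly) decrease $\cut(f,+1)$ and (weakly) increase $\cut(f,-1)$, hence $\Delta(f)=\cut(f,-1)-\cut(f,+1)$ is non-decreasing along $\sigma'_{\oC}$ once all the ``boundary'' information has been observed—but turning this into a formal monotonicity statement and verifying that it is exactly the content needed to make condition~(iii) trigger at most once per fork is the delicate step. Once it is established, combining Case~(a) via Lemma~\ref{incr_cutsize} and Case~(b) via Lemma~\ref{dc_second} completes the proof.
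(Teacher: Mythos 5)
Your proposal is correct and follows essentially the same route as the paper's own proof: the identical two-case split on the sign of $\Delta(i_s^*)$ at time $s$, with Case~(a) handled by Lemma~\ref{incr_cutsize} plus edge-disjointness of the newly created hinge lines and the definition of $\xi_{\oc}$, and Case~(b) handled by the once-per-fork monotonicity argument under $\sigma'_{\oC}$ together with Lemma~\ref{dc_second}. The monotonicity fact you flag as delicate is asserted rather than proved in the paper as well (it is the same fact already relied upon inside the proof of Lemma~\ref{dc_second}), and your stated intuition for it is the intended justification.
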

\begin{proof}
Assume nodes are revealed according to $\sigma_{\oC}'$, and
let $s$ be any type-3 step when a new element is added to $D_C'$.
% We categorize this kind of augmentation of set $D_C'$ into only two cases: 
There are two cases:
(a) $\Delta(i_s^*) \le 0$ at time $s$ or 
(b) $\Delta(i_s^*) > 0$ at time $s$.

Case (a). Lemma \ref{incr_cutsize} combined with the fact that all hinge-lines 
created are edge-disjoint, ensures that we can injectively associate each of 
these added elements 
% added to $D_C'$ 
with an edge of $E_C$ in such a way that the total weight of these edges is bounded 
by $\Phi^W_{\oC}$. This in turn implies that the total number of elements added to 
$E_C$ is $\scO\bigl(\xi_{\oc}(\Phi^W_{\oC})\bigr)$.

Case (b). Since we assumed that nodes are revealed according to $\sigma_{\oC}'$, we have that
$\Delta(i_s^*)$ is positive for all steps $t > s$. Hence we have that case (b) can occur
only once for each of such forks $i_s^*$. Since this kind of fork belongs to 
$\cF$, we can use Lemma~\ref{dc_second} and conclude
that (b) can occur at most $|\cF|= \scO\bigl(\xi_{\oc}(\Phi^W_{\oC})\bigr)$ times.
\end{proof}
\begin{lemma}
\label{rn-direction_cf}
\label{lm:number_rnd}
With the notation introduced so far, we have $|D_C| = \scO\bigl(\xi_{\oc}(\Phi^W_{\oC})\bigr)$.
\end{lemma}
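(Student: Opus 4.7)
The plan is to combine the three preceding lemmas (\ref{dc_first}, \ref{dc_second}, \ref{dc_third}) that bound the contributions to $D_C'$ at each of the three types of construction steps. Recall that $D_C'$ was explicitly built by partitioning its elements according to whether they are added (i) in the initial phase after the $\Phi_{\oc}$ revelations of nodes in $\oC \setminus C$, (ii) at the creation of a new fork $f$ with $\Delta(f) \le 0$, or (iii) at the creation of a new hinge line ending at a fork with nonpositive $\Delta$.

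The first step of the proof is to invoke each lemma to bound the cardinality of the contribution of each of these three phases by $\scO\bigl(\xi_{\oc}(\Phi^W_{\oC})\bigr)$. Summing the three bounds and using the fact that the order of magnitude is preserved under addition of finitely many terms of the same order yields $|D_C'| = \scO\bigl(\xi_{\oc}(\Phi^W_{\oC})\bigr)$.

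The second and final step is to recall the set inclusion $D_C \subseteq D_C'$, which was established right after the definition of $D_C'$ (any rn-direction in $D_C$ is captured either by a hinge line already present at the end of the initial phase, in which case it corresponds to an element added at step 1 or later via a type-2 step, or it is generated later by a newly created fork or hinge line, hence a type-2 or type-3 step). This inclusion immediately gives $|D_C| \le |D_C'| = \scO\bigl(\xi_{\oc}(\Phi^W_{\oC})\bigr)$, which is the claim of the lemma.

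Since all the real work has been done in the preceding lemmas, there is no substantial technical obstacle here: the proof is essentially a one-line combination of Lemmas \ref{dc_first}, \ref{dc_second}, and \ref{dc_third}, together with $D_C \subseteq D_C'$. The only subtlety worth noting is that the three bounds refer to the same revelation order $\sigma_{\oC}'$ used to \emph{define} $D_C'$, so they can be added together without any accounting issues across different orderings.
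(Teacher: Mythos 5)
Your proposal is correct and follows essentially the same route as the paper's own proof: combine Lemmas \ref{dc_first}, \ref{dc_second}, and \ref{dc_third} to get $|D_C'| = \scO\bigl(\xi_{\oc}(\Phi^W_{\oC})\bigr)$, then conclude via the inclusion $D_C \subseteq D_C'$. Your added remark that all three bounds are stated under the same revelation order $\sigma_{\oC}'$ is a sensible observation, but the argument is otherwise identical to the paper's.
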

\begin{proof}
Combining Lemma \ref{dc_first}, Lemma \ref{dc_second}, and Lemma \ref{dc_third} 
we immediately have $D_C' = \scO\bigl(\xi_{\oc}(\Phi^W_{\oC})\bigr)$. 
The claim then follows from $D_C \subseteq D_C'$.
\end{proof}

We are now ready to prove the theorem.

\begin{proofof}{Theorem~2}
Let $F_T$ be the union of $F_C$ over $C \in \scC$.
Using Lemma~\ref{lm:number_rnd} we deduce $|V_{F_C}| = \Phi_{\oC} + \scO\bigl(\xi_{\oc}(\Phi^W_{\oC})\bigr) = \scO\bigl(\xi_{\oc}(\Phi^W_{\oC})\bigr)$, where the term $\Phi_{\oC}$ takes into account that at most one r-line of $F_C$ may be associated with each $\phi$-edge of $\oC$.

By definition of $\xi(\cdot)$, this implies $|V_{F_T}| = \scO\bigl(\xi(\Phi^W)\bigr)$.
Using part (i) and (ii) of Lemma~\ref{lm:m_L} we have
$|M_T| \le |\cf| + |\ci| + |\co| \le |\cF| + |\cFF| + \sum_{L \in V_{F_T}} m_L \le \sum_{L \in V_{F_T}} m_L + \scO\bigl(\xi(\Phi^W)\bigr)$.

Let now $\scT(F_T)$ be the set of components of $F_T$. Given any tree $T' \in \scT(F_T)$, let $r(T')$ be the r-line root of $T'$.
Recall that, by part (iii) of Lemma~\ref{lm:m_L} for any tree $T' \in \scT(F_T)$ we can find a set $\scL(r(T'))$ of $|V_{T'}|$ edge-disjoint line graphs all included in $r(T')$ such that
$m_{T'}$ is of the order of $\sum_{L \in \scL_{T'}(r(T'))} m_L + |V_{T'}|$.
Let now $\scL_T'$ be equal to $\cup_{T' \in \scT(F_T)} \scL(r(T'))$.
Thus we have
\[
    |M_T| = \scO\left(\sum_{L \in \scL_T'} m_L+ |V_{F_T}| + \xi(\Phi^W)\right) = 
\scO\left(\sum_{L \in \scL_T'} m_L + \xi(\Phi^W)\right)~.
\]
Observe that $\scL_T'$ is not an edge disjoint set of line graphs included in $T$ only because each $\phi$-edge may belong to two different lines of $\scL_T'$.
By definition of $m_L$, for any line graphs $L$ and $L'$, where $L'$ is obtained from $L$ by removing one of the two terminal nodes and the edge incident to it, we have $m_{L'} = m_L + \scO(1)$.
If, for each $\phi$-edge shared by two line graphs of $\scL_T'$, we shorten the two line graphs so as 
no one of them includes the $\phi$-edge, we obtain a new set of edge-disjoint line graphs $\scL_T$ such that  $\sum_{L \in \scL_T'} m_L = \sum_{L' \in \scL_T} + \xi(\Phi^W)$. Hence, we finally obtain  $|M_T| = \scO\Bigl(\sum_{L' \in \scL_T} m_{L'}+ \xi(\Phi^W)\Bigr) = \scO\Bigl(\sum_{L' \in \scL_T} m_{L'}\Bigr)$, where in the last equality we used the fact that $m_{L'} \ge 1$ for all line graphs $L'$.
\end{proofof}

\end{document}